\documentclass{article}
\usepackage{enumitem}

\usepackage{bm} 
\usepackage{PRIMEarxiv}
\usepackage{amsmath}
\usepackage[utf8]{inputenc} 
\usepackage[T1]{fontenc}    
\usepackage{hyperref}       
\usepackage{url}            
\usepackage{booktabs}       
\usepackage{amsfonts}       
\usepackage{nicefrac}       
\usepackage{microtype}      
\usepackage{lipsum}
\usepackage{graphicx}
\usepackage{xcolor}
\usepackage{subcaption}
\usepackage{enumitem}
\usepackage{tikz}
\newcommand*\scircled[1]{\tikz[baseline=(char.base)]{
            \node[shape=circle,draw,inner sep=0.3pt,scale=0.6] (char) {#1};}}
\usepackage{hyperref}
\usepackage{url}
\usepackage{algorithm}%
\usepackage{algpseudocode}
\newtheorem{theorem}{Theorem}
\newtheorem{lemma}[theorem]{Lemma}
\newenvironment{proof}[1][Proof]
{
    \par\noindent\textbf{#1. }\ignorespaces
}
{
    \hfill$\square$\par\medskip
}
\graphicspath{{media/}}     

\title{Learn to Change the World: Multi-level Reinforcement Learning with Model-Changing Actions}
\author{
  Ziqing Lu \\
  University of Iowa \\
  \texttt{ziqlu@uiowa.edu}
  \And
  Babak Hassibi\\
  California Institude of Technology\\
  \texttt{hassibi@systems.caltech.edu}\\
  \And
  Lifeng Lai \\
  University of California, Davis \\
  \texttt{lflai@ucdavis.edu}
  \And
  Weiyu Xu \\
  University of Iowa \\
  \texttt{weiyu-xu@uiowa.edu}
}

\begin{document}
\maketitle

\begin{abstract}
Reinforcement learning usually assumes a given or sometimes even fixed environment in which an agent seeks an optimal policy to maximize its long-term discounted reward. In contrast, we consider agents that are not limited to passive adaptations: they instead have model-changing actions that actively modify the RL model of world dynamics itself. Reconfiguring the underlying transition processes can potentially increase the agents' rewards. Motivated by this setting, we introduce the multi-layer configurable time-varying  Markov decision process (MCTVMDP). In an MCTVMDP, the lower-level MDP has a non-stationary transition function that is configurable through upper-level model-changing actions. The agent's objective consists of two parts: Optimize the configuration policies in the upper-level MDP and optimize the primitive action policies in the lower-level MDP to jointly improve its expected long-term reward.  
\end{abstract}

\keywords{reinforcement learning, optimization, robust learning}

\section{Introduction}
Reinforcement learning, which is based on the mathematical model Markov decision process (MDP), has been widely applied in many real-world sequential decision problems, such as in RLHF \cite{RLHF}, in financial decisions \cite{portfolio-management}, and in robotic control algorithms \cite{DRL-robotics}, etc. There are sufficient works in finding optimal policies in a fixed environment, through both model-based approaches \cite{deisenroth2011pilco} such as planning on estimated models, and model-free approaches such as variants of Q-learning \cite{Q-learning} and variants of policy gradient methods \cite{policy-gradient}. However, in many applications, an MDP environment can be changed on purpose in order to increase the obtainable rewards.  

In this paper, we consider a new framework of MDP-based reinforcement learning (RL). In this new type of RL framework, an agent have actions which it can execute to change the involved MDP model itself. These changes include, but are not limited to, changing its transition kernels, rewards, and even the set of allowable later actions. Traditionally, in the RL MDP models, state transitions after taking actions still follow the same underlying statistical model, {and these actions can only potentially change the states the agents are in.} In other words, a traditional RL behaves within a pre-specified statistical model, and this statistical model stays unchanged no matter what actions are taken. In contrast, the new RL framework proposed in this paper have the potential and mechanism of breaking out of the current model, through actions that change or improve the underlying MDP.   
For example, consider an RL agent with actions that can change the transition kernels of the underlying MDP. After model-changing actions are taken at certain time steps of the MDP, the transition kernel in the following time steps will be changed but remains constant until further model-changing actions are taken. 

As special cases of the proposed RL mechanism with model-changing actions, we consider configurable RL for time-varying environments and multi-level (including bi-level) environment-changing RLs. In the configurable RL for a time-varying environment,  after a certain number of time steps, the transition kernel will be changed to a different transition kernel by nature.  The agent then takes action at that time step that can change or improve the transition kernel. In the bi-level RL scheme, the lower-level RL is responsible for finding the optimal policy given a certain RL configuration, while the upper-level MDP is responsible for finding the optimal policy for configuring the lower-level MDP. The lower-level MDP's quantities, which the upper-level MDP can configure, thus become the states of the upper-level MDP. For example, if the upper-level MDP is exploring actions configuring the lower-level MDP's transition kernel, the lower level's transition kernel becomes the state of the upper-level MDP.  This is interesting since within a single MDP model, the states and the transition kernel between them are completely different subjects or concepts; however, in the unique multi-level MDP setting considered in this paper, the lower-level transition kernel can become the state of the upper MDP. To the best of our knowledge, such formulations where the transition kernels become upper-level MDP's states are rare or unseen before in the literature.          

Let us consider an example of a 3-level MDP framework with model-changing actions. The highest level, namely the 3rd level,  of this framework is an MDP which represents the evolution of the politics and legislation of a country: this MDP's states are the evolving approved guidelines (fiscal policies) for this country's central bank, and the actions are the efforts of legislation. Note that legislative efforts can potentially result in random fiscal policies that depend on political unexpected political compromises and random political events. The 2nd-level MDP represents the dynamics of the central bank setting up and exploring monetary policies: the states of this 2nd-level MDP are the monetary policies, and the transition kernel between the monetary policies is affected by the fiscal policies set by the 3rd-level MDP. The actions on this 2nd level are monetary policy explorations, such as motions to change the Federal Reserve interest rates. These actions may result in random-sized interest rate changes due to monetary policy voting results and random foreign countries' economic environments. The 1st-level MDP represents the society's economic activities: this lower-level MDP's states are the situations of the society's productions and consumptions of goods and services, and the transition kernel between the states is dictated or configured by the monetary policies adopted in the 2nd-level MDP.  

Consider another example of a robot trying to cross a fast-flowing river. Without changing the environment, the robot performs RL on non-model-changing maneuvers e.g., moving left, right, backward, forward, upward, or downward) to adapt to the river flow, but may still be swept away with a high probability if the current is too strong.  In our model-changing RL setting, the robot can instead modify the environment of this river, for example, by placing stepping stones in the river. This naturally leads to a bi-level MDP with model-changing actions. The state in the upper-level MDP is the configuration of the river environment, dictating the transition kernel for the lower MDP for the robot's non-model-changing maneuvers. The upper-level MDP's actions are the robot's actions, which change the configuration of the river environment.  We note that the robot's actions of putting stepping stones into the river can lead to transitions to random configurations of the river environment, because it is random whether a deployed stone is washed away or stays still in place under the river current.  The lower-level actions are the robot's non-model-changing maneuvers under a given river environment. The states affected by the lower-level actions are the locations of the robot in the river. 

Motivated by these examples, this paper makes the following contributions: 1) We formulated the problem of RL with model-changing actions; and we proposed two special models for RL with model-changing actions: multi-level configurable MDPs and time-varying configurable MDP; 2) We proposed algorithms including convex optimization formulations and multi-level value iterations for solving multi-level configurable RL problems; 3) We proved theoretical performance guarantees for the proposed algorithms; 4) We provided numerical results showing the effectiveness of configuring or improving favorable RL environment through learning.

\textbf{Related works}: The literature most closely related to our work is \emph{configurable MDP} (CMDP) \cite{fundamental_CMDP, Analysis_CMDP, pmlr-v162-chen22ab, thoma2024contextualbilevelreinforcementlearning, Silva2018Gradient, Modhe2021ModelAdvantageOF, MaranOnlineCI, Ramponi2021LearningIN}, where the agent can configure some environmental parameters to improve the performance of a learning agent. Within the series of works in configurable MDP, \cite{Silva2018Gradient} assumes that a “better” world configuration corresponds to a transition probability matrix whose corresponding optimal policy yields a larger total discounted reward. They formulate the problem of reasoning over “good” world configurations as a non-convex constrained optimization problem, where the agent explicitly balances the benefits of changing the world against the costs incurred by such modifications. Unlike our approach, their formulation does not employ upper-level MDP abstractions/learning to model/improve configuration actions, but relies on direct gradient-based optimization.  In addition, our paper deals with time-varying non-stationary lower-level MDPs.  

\cite{thoma2024contextualbilevelreinforcementlearning} addresses how to optimize configurations for a contextual MDP where some parameters are configurable while others are stochastic. Their bi-level gradient-based formulation (BO-MDP) can be viewed as a Stackelberg game where the leader and a random context beyond the leader's control together configure an MDP, while (potentially many) followers optimize their strategies given the setting. \cite{pmlr-v162-chen22ab} focuses on regulating the agent's interaction with the environment by redesigning the reward or transition kernel parameters. They formulate this problem as a bi-level program, in which the upper-level designer regulates the lower-level MDP, aiming to induce desired policies in the agent and achieve system-level objectives. In contrast with these works, our work places the configuration process under the control of the RL agent itself, which adjusts/configures the lower-level kernels with the goal of maximizing its own reward.  In our work, the RL model has built-in actions that can change the model itself, which is not the case in previous works. Moreover, in our setting, upper-level configurations are explicitly modeled as an MDP, which is unlike the one-time configuration in previous works. 
\cite{Analysis_CMDP} analyzes the complexity of solving CMDPs, demonstrates several parameterizations of CMDPs, and derives a gradient-based solution approach. Their approach, particularly in the continuous configuration setting, is related to our linear approximation method for solving the special case of TVCMDPs. However, our primary contribution lies in the study of multi-layer configurable MDPs. which differs fundamentally from the continuous framework in \cite{Analysis_CMDP}. Our work also covers time-varying transition kernels, which were not investigated by these previous works.  A unique characteristic of our framework is that the transition kernels of lower-level MDPs actually become the states of an upper-level MDP.

Our work is different from hierarchical RL/MDP \cite{li2022hierarchical} because in our multilayer model, the upper-level MDP is built upon choosing (configuring) a better transition kernel, while hierarchical MDP deals with decomposing long-horizon tasks into simpler subtasks or learning hierarchical policies. For other literature on non-stationary environments, such as semi-MDP \cite{semi-MDP}, Meta-RL \cite{meta-RL}
and Bayesian MDPs \cite{Duff2002OptimalLC}, the key distinction from our formulation lies in the design-driven ``configuration'' nature of configurable MDPs: the agent is allowed to modify the environment itself to improve its potential returns. In semi-MDPs, the agent can only temporally extend actions, but the environment remains fixed. In contrast, in a configurable MDP the agent can choose both the environment configuration and an associated policy. Meta-RL aims to train the agent across a distribution of tasks to enable rapid adaptation to unseen tasks, while the configurable MDP assumes a known set of configurations, and the agent's goal is to pick the best environment and policy.
\section{Problem formulation}
\label{prob_formulation}

\textbf{General ideas of a framework with model-changing actions}:
We consider an ``MDP'' $\mathcal{M}_C=\{\mathcal{S},\mathcal{A},P,T,r,\gamma\}$ having actions which can change its transition kernel. In $\mathcal{M}_C$, $\mathcal{S}$ is the state space with $|\mathcal{S}|=n$, $\mathcal{A}$ is the action space, the transition kernel $P:\mathcal{S}\times\mathcal{A}\times\mathcal{S}\rightarrow[0,1]$ is subject to change or configuration and may be time-varying. $r$ is the reward function and $\gamma$ is the discounting factor. Different from a regular MDP, at certain time steps, the agent may adopt model-changing actions which can change the MDP to have a new (maybe random) transition kernel, and this new transition kernel will remain fixed until another new model-changing action adopted in the future changes the transition kernel again. 

Because the transition kernels can change over time due to model-changing actions, most traditional theories for MDP do not apply.  We consider the following special cases called multi-level configurable MDP where the time steps are divided into episodes, and the model-changing actions are only taken at the beginning of each episode.  The model-changing actions are actions of upper-level MDPs which dictate or change the transition kernels of lower-level MDPs. We focus on bi-level configurable MDPs, which can be extended to multi-level MDP in a similar way. 

\textbf{Bi-level configurable MDP}: We build a bi-level configurable MDP that separates model-changing actions (configuration operations) from primitive actions, using an episodic-style hierarchical structure. For the \textbf{upper-level} model, the agent chooses a model-changing action at the start of each episode that configures the lower-level environment by selecting a lower-level transition kernel. This configuration determines the dynamics for the entire episode. Sequential decisions on model-changing actions throughout episodes form an upper-level MDP, aiming to optimize the model-changing policy and to improve the lower-level model over time. There is a cost imposed on taking a model-changing action, which is represented as a penalty included in the upper-level reward function. For the \textbf{lower-level} model, within each episode, the agent interacts with a standard MDP with the current transition kernel set by the upper level. The agent normally selects primitive actions, receives rewards, and aims to optimize its policy. We use $k$ to denote the episode steps and use $t$ to denote the time steps within an episode.

Mathematically, the bi-level MDP can be formulated as: a lower-level $\mathcal{M}_L=\{\mathcal{S},\mathcal{A},P,T,\mu_0,r,\gamma\}$ and an upper-level $\mathcal{M}_U=\{\mathcal{P},\mathcal{B},Q, K,R,\lambda\}$. In $\mathcal{M}_L$, which can be considered similar to a standard MDP, $\mathcal{S}$ is the state space with $|\mathcal{S}|=n$, $\mathcal{A}$ is the action space, the lower-level transition kernel $P:\mathcal{S}\times\mathcal{A}\times\mathcal{S}\rightarrow[0,1]$ is determined by $\mathcal{M}_U$ and is subject to change. We denote the transition kernel in episode $k$ as $P_k$. $T$ is the number of time steps within one episode and can go large (even ``infinity'', relatively speaking) when we consider an ``infinite-horizon'' $\mathcal{M}_L$. $\mu_0\in\mathbb{R}^{n}$ is the initial state distribution of each episode and is set to be uniform. At the beginning of each episode $k$ , the state distribution is reset to be $\mu_0$. $r:\mathcal{S}\times\mathcal{A}\rightarrow \mathbb{R}$ is the reward function and we denote $r_{\max}=\max_{(s,a)\in\mathcal{S}\times\mathcal{A}} r(s,a)$. $\gamma\in[0,1)$ is the lower-level discount factor.

We use $\pi_k:\mathcal{S}\rightarrow\mathcal{A}$ to denote the primitive policy of episode $k$ that determines the lower-level actions of the agents. For episode $k$, the lower-level state-value (namely V-value) function of the agent following the policy $\pi_k$ is:
$$V^{\pi_k}(s)=\mathbb{E}_{P_k}[\sum_{t=0}^{\infty}\gamma^tr(s_t,a_t)|s_0=s,a_t\sim \pi_k],s\in\mathcal{S}$$
and the closed-form solution of the Bellman equation for the state-value function is:
\begin{align}\label{closed-form solution}
V^{\pi_k}=(I-\gamma P^{\pi_k}_k)^{-1}r^{\pi_k},
\end{align}
where $V^{\pi_k,P_k}\in\mathbb{R}^n$ is the vectorized state-values, and $P^{\pi_k}_k,r^{\pi_k}$ are in the form:
\begin{align*}
    P^{\pi_k}_k=\begin{bmatrix}
        -P_k(\cdot|s_1,\pi_k(s_1))-\\-P_k(\cdot|s_2,\pi_k(s_2))-\\\vdots\\-P_k(\cdot|s_n,\pi_k(s_n))-
    \end{bmatrix},r^{\pi_k}=\begin{bmatrix}
        r(s_1,\pi_k(s_1))\\r(s_2,\pi_k(s_2))\\ \vdots\\r(s_n,\pi_k(s_n))
    \end{bmatrix}.
\end{align*}
We denote the lower-level initial expected return $J(\pi_k,P_k)$ of episode $k$ as:
\begin{align}\label{expected-V}
J(\pi_k,P_k)=\mu_0^TV^{\pi_k,P_k}.
\end{align}

In $\mathcal{M}_U$, $\mathcal{P}$ is the space of lower-level transition kernels and is the ``state'' space of $\mathcal{M}_U$. For simplicity, we assume that the upper-level state $\mathcal{P}$ with $|\mathcal{P}|=m$ is discrete and finite. The upper-level state $P\in\mathcal{P}$ is also the transition kernel $P$ of the lower-level MDP $\mathcal{M}_L$. $\mathcal{B}$ is the set of model-changing actions. $Q:\mathcal{P}\times\mathcal{B}\times\mathcal{P}\rightarrow[0,1]$ is the upper-level kernel that determines the transitions of upper-level states. The notation of the upper-level kernel is $Q(P'|P,b)$ where $P'$ is the configured lower-level kernel, and the distribution of $P'$ depends on the current lower-level kernel $P$ and the model-changing action $b$.
$K$ is the total number of episodes and can go to infinity when we consider an infinite-horizon $\mathcal{M}_U$. $\lambda\in[0,1)$ is the upper-level discount factor. $R:\mathcal{P}\times\mathcal{B}\rightarrow\mathbb{R}$ is the upper-level reward function. The reward of episode $k$ is defined as:
\begin{align}\label{upper-level reward}
R(P_k,b_k)=\sum_{P_{k+1}\in\mathcal{P}}Q(P_{k+1}|P_k,b_k)J(\pi^*_{k+1},P_{k+1})-C(P_{k},b_k),
\end{align}
where $\pi^*_{k+1}$ is the optimal primitive policy of episode $k+1$, and $C(P_{k},b_k)$ is the cost function. We denote $R_{\max}=\max_{(P,b)\in\mathcal{P}\times\mathcal{B}}R(P,b)$.

We let $W^{\Theta}(P)$ denote the higher-order state-value function if the agent follows the higher-order policy of configuration operations $\Theta:P\rightarrow \mathcal{B}$:
$$W^{\Theta}(P)=\mathbb{E}_{Q}[\sum_{k=0}^{\infty}\lambda^kR(P_k,b_k)|P_0=P,b_k\sim\Theta],P\in\mathcal{P}$$
The agent aims to find an optimal higher-order policy $\Theta^*$ such that $\Theta^*=\arg\max_{\Theta}W^{\Theta}$.

\textbf{Special case of bi-level configurable MDPs: Time-variant MDP and continuous configuration}: We consider a special case when the upper-level state space $\mathcal{P}$ is continuous and agents can continuously change the lower-level environment deterministically. In this case, we only consider a one-layer MDP, in which the transition kernel is time-variant throughout episodes and configurable. Additionally, the configuration operations incur costs when the agent modifies a less favorable transition kernel to a more favorable one. We study a constrained optimization problem that seeks to maximize the agent's discounted long-term reward asymptotically as time goes to infinity, taking into account both the time-varying world dynamics and a budget constraint on the total cost of configurations. 

Mathematically, the time-variant configurable MDP (TVCMDP) can be described as the following: $\mathcal{M}_
{TVC}=\{\mathcal{S}, \mathcal{A}, \mathcal{C}, K,\{P_k\}_{k=1}^K,\mu_0, T,r, \gamma\}$, where $\mathcal{C}$ is the space of configuration operations, $K$ is the total number of episodes, and $T$ is the total number of time steps within each episode. The time-varying transition kernel $P_k,k\in[K]$ is determined by nature at the beginning of each episode (for example, without maintenance, road can randomly deteriorate after a period of time). All other parameters are the same as those introduced in $\mathcal{M}_L$.

Within episode $k$, the configuration operation $\mathrm{x}\in\mathcal{C}$ is represented by $\mathrm{x}_k\in[-1,1]^{n\times n}$ where $\mathrm{x}_k$ is the amount of change the agent does to the default transition kernel $P_k$. To evaluate the effect of configurations $\mathbf{x}=\{\mathrm{x}_k\}_{k=0}^K$, we define the sum of configured function $F(\mathbf{x};\bm{\pi})$ with the set of configuration operations $\mathbf{x}=\{\mathrm{x}_k\}_{k=0}^K$ and the set of primitive policies $\bm{\pi}=\{\pi_k\}_{k=0}^K$ as:
\begin{align}\label{objective-TVCDMP}
F(\mathbf{x};\bm{\pi})=\sum_{k=0}^KJ(\pi_k,P_k+\mathrm{x}_k),
\end{align}
where $J(\pi,P)$ is determined by (\ref{expected-V}). The agent in TVCMDP aims to maximize the objective function (\ref{objective-TVCDMP}) by optimizing the configuration variables $\bm{\mathrm{x}}$.

\section{Cost-constrained optimization problem on TVCMDP}
\textbf{Cost constrained optimization problem}: Recall that in TVCDMP, the agent wants to maximize the objective function (\ref{objective-TVCDMP}) by optimizing the configuration variables $\mathrm{x}_k$ throughout the $K$ episodes. The original cost-constrained objective function under the configuration budget is (\ref{original-optim-prob}), 
\begin{align}\label{original-optim-prob}
\max_{\mathrm{x}_k}\max_{\pi_k} \;& \sum_{k=0}^{K} J(\pi_k, P_k + \mathrm{x}_k),\\
\text{s.t.} \;& \sum_{k=0}^{K} C(\mathrm{x}_k) \leq B, \nonumber\\
& \sum_{j=1}^{n} \big(P^{\pi_k}_k + \mathrm{x}_k\big)_{ij} = 1, \quad \forall i,k, \nonumber\\
& 0 \leq \big(P^{\pi_k}_k + \mathrm{x}_k\big)_{ij} \leq 1, \quad \forall i,j,k, \nonumber
\end{align}
where $B$ is the configuration budget, and $C(\mathrm{x}_{k})$ is the cost function of changing the default transition kernel by $\mathrm{x}_{k}$. Because configuration to the environment may be a highly-costly operation, we assume that the cost grows exponentially as the amount of configuration increases, i.e., $C(\mathrm{x}) = \sum_{ij}\beta (e^{\alpha|(\mathrm{x})_{i,j}|}-1)$, where $\alpha,\beta\in\mathbb{R}$ are non-negative constants, and $\alpha$ can be large. To solve this cost-constrained optimization problem, we first linearize the objective function (\ref{objective-TVCDMP}). As a by-product, we also solve out the Jacobian $\nabla_{P^{\pi}}V^{\pi}\in\mathbb{R}^{n\times n\times n}$ of $V^{\pi}$ with respect to $P^{\pi}$. Please see Appendix \ref{Jacobian}.

\textbf{Linear approximation of configured state-values}: Consider the closed form solution in (\ref{closed-form solution}), we have that in any episode $k$, for a fixed policy $\pi$, the state-value is $V^{\pi}=(I-\gamma P^{\pi})^{-1}r^{\pi}, V^{\pi}\in\mathbb{R}^n, P^{\pi}\in[0,1]^{n\times n}, r^{\pi}\in\mathbb{R}^n$. With a sufficiently small change $\mathrm{x}\in[-1,1]^{n\times n}$ in the transition kernel, the configured state-value function $V^{\pi}(P^{\pi}+\mathrm{x})$ by linear approximation is computed by:
\begin{align*}
    V^{\pi}(P^{\pi}+\mathrm{x})&=\Big(I-\gamma (P^{\pi}+\mathrm{x})\Big)^{-1}r^{\pi}\\&\approx (I-\gamma P^{\pi})^{-1}r^{\pi} - (I-\gamma P^{\pi})^{-1}(-\gamma \mathrm{x})(I-\gamma P^{\pi})^{-1}r^{\pi}\\
    &= (I-\gamma P^{\pi})^{-1}r^{\pi} +\gamma (I-\gamma P^{\pi})^{-1}\mathrm{x}(I-\gamma P^{\pi})^{-1}r^{\pi}
\end{align*}
We let matrix $M=\gamma (I-\gamma P^{\pi})^{-1}, M\in\mathbb{R}^{n\times n}$, and let vector $N=(I-\gamma P^{\pi})^{-1}r^{\pi}, N\in\mathbb{R}^n$. The above formula can be rewritten as:
\begin{align}\label{linearization}
    V^{\pi}(P^{\pi}+\mathrm{x}) \approx N + M\mathrm{x}N
\end{align}

\textbf{Convex optimization problem}: We can now rewrite the original optimization problem (\ref{original-optim-prob}) by applying the linear approximation (\ref{linearization}). Now the $N_k\in\mathbb{R}^n$ and $M_k\in\mathbb{R}^{n\times n}$ are associated with episode $k$. The objective function can be rewritten as $
\max_{\mathrm{x}_{k}}\max_{\pi_k}\sum_{k=0}^{K}\mu_0^TN_k+\mu_0^TM_{k}\mathrm{x}_{k}N_{k}.$ We can ignore the constant term $\mu_0^TN_{k}$. Since $\mu_0^TM_{k}\mathrm{x}_{k}N_{k}$ is a scalar, we have $\mu_0^TM_{k}\mathrm{x}_{k}N_{k}=tr(\mu^TM_{k}\mathrm{x}_{k}N_{k})$, and $tr(\cdot)$ is the matrix trace function. According to the cyclic property of trace, we have $tr(\mu_0^TM_{k}\mathrm{x}_{k}N_{k}) = tr(N_{k}\mu_0^TM_{k}\mathrm{x}_{k})$. Considering the property that $tr(A^T,\mathrm{x})=\langle A,\mathrm{x}\rangle$, where $\langle\cdot, \cdot\rangle$ is the Frobenius norm.
If we let $A^T_{k}= N_{k}\mu_0^TM_{k}$, this objective function is equal to $\langle A_{k},x_{k}\rangle =\sum_{ij}(A_{k})_{ij}(\mathrm{x}_{k})_{ij}$, and $A_{k}=M_{k}^T\mu_0 N^T_{k}$, $A_k\in\mathbb{R}^{n\times n}$.

We assume that the cost function is point-wise exponential and $B$ is the total cost budget, i.e., $\sum_{k\leq K,i,j\leq n}(e^{\alpha|(\mathrm{x}_{k})_{ij}|}-1)\leq B$, and the constant $\alpha$ is large. In order to make $P^{\pi}_{k}+\mathrm{x}_{k}$ a transition kernel. $\mathrm{x}_{k}$ also need to satisfy that $\sum_{j}(\mathrm{x}_{k})_{ij}=0, \forall i,k$. By reorganizing these constraints, we get a convex optimization problem with a linear objective function (\ref{linearized-optimization-prob}).
\begin{align}\label{linearized-optimization-prob}
\max_{\mathrm{x}_k}\max_{\pi_k} \;& \sum_{k=0}^{K} \langle A_k, \mathrm{x}_k \rangle,\\
\text{s.t.} \;& \sum_{k,i,j} \big(e^{\alpha |(\mathrm{x}_k)_{ij}|} - 1\big) \leq B, \nonumber\\
& \sum_{j=1}^{n} (\mathrm{x}_k)_{ij} = 0, \quad \forall i,k. \nonumber
\end{align}
Here, the optimization variables are the configurable variables $\{\mathrm{x}_k\}_{k=0}^K$ and $\{\pi_k\}$ are the set of policies the agent can adopt in episode $k$. This problem is solvable using classic convex optimization methods. One may even iteratively solve the original problem by performing localized linearization of the original objective function. Compared with previous works on configurable RL for fixed kernel \cite{Silva2018Gradient}, our novelty for this special case is that we are dealing with time-varying non-stationary RL.

\section{Bi-level Model-based Value Iteration}
\textbf{Algorithm}:
We propose the model-based Bi-level value iteration algorithm \ref{value iteration} to solve the Bi-level configurable MDP model proposed in section \ref{prob_formulation}. 
With the estimations $\{\hat{P}\}_{\hat{P}\in\hat{\mathcal{P}}}$ and $\hat{Q}$, we provide the model-based bi-level value iteration algorithm as follows:
\begin{algorithm}
\caption{Bi-level Value Iteration}\label{value iteration}
\textbf{Input}: 
The $m$ empirical infinite horizon lower-level MDPs $\hat{\mathcal{M}}_L=\{\mathcal{S}, \mathcal{A}, \hat{P}, \mu_0,r, \gamma\}$, for $\hat{P}\in\hat{\mathcal{P}}$, and the infinite-horizon empirical upper-level MDP $\hat{\mathcal{M}}_U = \{\hat{\mathcal{P}},\mathcal{B},\hat{Q}, \hat{R},\lambda\}$, number of lower-level iterations $H_L>0$, number of upper-level iterations $H_U>0$, initial estimations $\{V^0_{\hat{P}}\}_{\hat{P}\in\hat{\mathcal{P}}}$ and $W^0$\\
\textbf{Result}: upper-level estimation $W^H\in\mathbb{R}^m$ and higher-order policy $\Theta_H$
    \begin{algorithmic}[1]
    \For{$h$ from $0$ to $H_L$}\Comment{Lower-level value iterations}
    \For{$\hat{P}\in\hat{\mathcal{P}}$, }
\State$V^{h+1}_{\hat{P}}(s)\leftarrow \max_{a\in\mathcal{A}}\Big(r(s,a)+\gamma\sum_{s'\in\mathcal{S}}\hat{P}(s'|s,a)V_{\hat{P}}^{h}(s')\Big),\forall s\in\mathcal{S}$
    \EndFor
    \State$\pi^{H_L}_{\hat{P}} \leftarrow \text{greedy policy with respect to }V^{H_L}_{\hat{P}}$
    \State$J_{\hat{P}}\leftarrow\mu_0^TV^{H_L}_{\hat{P}}$
    \EndFor
    \For{$h$ from $0$ to $H_U$}\Comment{upper-level value iteration}
    \State$W^{h+1}(\hat{P})\leftarrow \max_{b\in\mathcal{B}}\Big(\sum_{\hat{P}'\in\hat{\mathcal{P}}}\hat{Q}(\hat{P}'|\hat{P},b)\Big(J_{\hat{P}'}+\lambda W^{h}(\hat{P}')\Big)\Big),\forall \hat{P}\in\hat{\mathcal{P}}$
    \EndFor
    \State$\Theta^{H_U}\leftarrow\text{ greedy policy with respect to $W^{H_U}$}$
    \State return $\Theta_{H_U},W^{H_U}$
    \end{algorithmic}
\end{algorithm}
To apply Algorithm \ref{value iteration}, we need to first estimate the ground-truth lower-level kernels $P\in\mathcal{P}$ and the ground-truth upper-level kernel $Q$. We assume that the reward function $r(s,a),\forall (s,a)\in\mathcal{S}\times\mathcal{A}$ is known and remains the same across all possible lower-level models.

To estimate each ground-truth lower-level kernel $P\in\mathcal{P}$, given a dataset $D$ of trajectories, $D=\{(s_1, a_1, r_1, s_2, \dots, s_{T+1})\}$, and we convert it into a series of $\{(s,a,r,s')\}$ tuples. We break each trajectory into $T$ tuples: $(s_1, a_1, r_1, s_2), (s_2, a_2, r_2, s_3),\dots, (s_T, a_T, r_T, s_{T+1})$. For every state-action pair $(s,a)$, let $D_{(s,a)}$ be the subset of tuples where the first element of the tuple is $s$, and the second element of the tuple is $a$. Then the elements in $D_{(s,a)}$ can be represented by $(r,s')$ since they share the same state-action pair $(s,a)$. Each empirical lower-level kernel $\hat{P}$ is estimated by the empirical frequency of state transitions, i.e.,
$\forall (s,a)\in\mathcal{S\times\mathcal{A}}, \ \ \hat{P}(s'|s,a)=Count((r,s'))/\Big|D_{(s,a)}\Big|.$ 

Similarly, to estimate the ground-truth upper-level kernel $Q$, given a dataset $D^q$ of trajectories, $D^q = \{(P_1, b_1, R_1, P_2,\dots, P_{K+1})\}$, and we convert it into a series of $\{(P,b,R,P')\}$ tuples. We break each trajectory into $K$ tuples: $(P_1, b_1, R_1, P_2), (P_2, b_2, R_2, P_3), \dots,(P_K, b_K, R_K, P_{K+1})$. For every lower-level kernel (upper-level state) and model-changing-action pair $(P,b)$, let $D^q_{(P,b)}$ be the subset of tuples where the first element of the tuple is $P$, and the second element of the tuple is $b$. Then the elements in $D^q_{(P,b)}$ can be represented by $(R,P')$. The empirical upper-level kernel $\hat{Q}$ is estimated by the empirical frequency of kernel transitions, i.e.,
$\forall (P,b)\in \mathcal{P}\times\mathcal{B}, \ \ \hat{Q}(P'|P,b)=Count((R,P'))/\Big|D^q_{(P,b)}\Big|.$

Algorithm \ref{value iteration} produces the higher-order state-values $W^{H_U}$ and the higher-order policy $\Theta^{H_U}$. The higher-order state-value represents the maximum achievable expected returns when the agent jointly optimizes both the environment configuration and its adaptation to the configured environment.  The policy $\Theta^{H_U}$ specifies the optimal model-changing action, i.e., configuration, to apply to the current lower-level kernel. Each lower-level expected return $J$, which is the ``average'' of lower-level state-values, contributes to the upper-level reward function $R$ (based on equation (\ref{upper-level reward})). The upper-level MDP makes decisions on model changing according to the information reported by the lower-level MDP.  \footnote {We remark that for Algorithm 1, we assume that there are $m$ possible states for the upper-level MDP, namely there are $m$ possible transition kernels for the lower-level MDP. Due to configuration error or estimation error, the estimation $\hat{P}$ may not be exactly the same as one of the $m$ possible kernels; however, we assume that those errors are small such that we still regard $\hat{P}$ as ``in'' the set of $m$ candidate ideal transition kernels and know which set member $\hat{P}$ is closest to or associated with. }

In the following section, we discuss how the physical limitation of the bi-level MDP model and the estimation error of the model-based algorithm would affect the performance of Algorithm \ref{value iteration}.  Here the physical limitation error comes from the precision limit of configuration, and the estimation error refers to error of lower-level MDP estimating its configured kernel. 
\section{Performance analysis}
\textbf{Notations}: We define some extra notations for the following analyses: By executing the lower-level policy $\pi:\mathcal{S}\rightarrow \mathcal{A}$, and executing the higher-order policy $\Theta: \mathcal{P}\rightarrow\mathcal{B}$: 
\begin{itemize}
\item $V_{P_c}^{\pi}$: the \textbf{ideal} lower-level state-value of the ideally configured lower-level MDP with the ideal transition kernel $P_c$; 
\item $V_{P}^{\pi}$: the \textbf{ground-truth} lower-level state-value of lower-level ground-truth kernel $P$ ($P$ may be different from $P_c$ due to configuration error); 

\item $V_{\hat{P}}^{\pi}$: the \textbf{empirical} lower-level state-value of lower-level MDP with empirical transition kernel $\hat{P}$; 

\item $W^{\Theta}_{Q}$: the \textbf{ideal} higher-order state-value of the upper-level MDP with ground-truth kernel $Q$, where we also assume the lower-level MDP has the ideal transition kernel $P_c$;

\item $W^{\Theta}_{\hat{Q}}$: the \textbf{empirical} higher-order state-value of the empirical upper-level MDP.
\end{itemize}

We now present the following estimation error lemma which characterizes the effect of estimation error, highlighting the performance gap which arises from the difference between the true transition kernel $P$ and its estimation $\hat{P}$.

\begin{lemma}[estimation error Lemma]\label{simulation-lemma}
    If $\forall (s,a)\in\mathcal{S}\times\mathcal{A}$, the total variance distance between the empirical kernel $\hat{P}$ and the ground-truth $P$ is bounded by $TV(P(\cdot|s,a), \hat{P}(\cdot|s,a))\leq\delta_g$, then for any policy $\pi:\mathcal{S}\rightarrow\mathcal{A}$, we have
    \begin{align*}
    \|V^{\pi}_{P}-V^{\pi}_{\hat{P}}\|_{\infty}\leq \frac{\gamma\delta_gV_{\max}}{(1-\gamma)},
    \end{align*}
where $V_{\max}:=\frac{r_{\max}}{1-\gamma}$. Moreover, let $V^{\pi^*(P)}_{P}$ and $V^{\pi^*(\hat{P})}_{\hat{P}}$ be the corresponding state-values of the optimal policies under the respective kernels. Then we also have 
    \begin{align*}
    \|V^{\pi^*(P)}_{P}-V^{\pi^*(\hat{P})}_{\hat{P}}\|_{\infty}\leq \frac{\gamma\delta_gV_{\max}}{(1-\gamma)},
    \end{align*}
where $\pi^*(P)$ and $\pi^*(\hat{P})$ are respectively the optimal policies for the lower-level MDP under $P$ and $\hat{P}$.

\end{lemma}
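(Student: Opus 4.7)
The plan is to prove the first inequality by a standard perturbation argument on the Bellman fixed-point equation for a fixed policy, and then derive the second via the usual optimality sandwich. For any state $s$ I would start from the Bellman equations $V^\pi_P(s) = r(s,\pi(s)) + \gamma\sum_{s'} P(s'|s,\pi(s))V^\pi_P(s')$ and the analogous identity for $\hat{P}$, subtract them, and insert $\pm\,\gamma\sum_{s'}P(s'|s,\pi(s))V^\pi_{\hat{P}}(s')$ to split the difference into a ``contraction'' term, which is an expectation of $V^\pi_P - V^\pi_{\hat{P}}$ under $P(\cdot|s,\pi(s))$, plus a ``kernel-gap'' term, which is the inner product of $P(\cdot|s,\pi(s))-\hat{P}(\cdot|s,\pi(s))$ with $V^\pi_{\hat{P}}$.

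For the kernel-gap term, I would exploit $\sum_{s'}\bigl(P(s'|s,a) - \hat{P}(s'|s,a)\bigr) = 0$ to shift $V^\pi_{\hat{P}}$ by any constant vector without changing the inner product; taking that constant to be the midpoint of the range of $V^\pi_{\hat{P}}$, combined with the hypothesis $TV(P(\cdot|s,a),\hat{P}(\cdot|s,a)) \le \delta_g$, produces a per-state bound of $\delta_g V_{\max}$, which matches the constant stated in the lemma. The contraction term is dominated by $\gamma\|V^\pi_P - V^\pi_{\hat{P}}\|_\infty$ because it is a convex combination. Taking $\|\cdot\|_\infty$ over $s$ on both sides then gives $\|V^\pi_P - V^\pi_{\hat{P}}\|_\infty \le \gamma\|V^\pi_P - V^\pi_{\hat{P}}\|_\infty + \gamma\delta_g V_{\max}$, and rearranging yields the first inequality.

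For the optimal-policy statement, I would use the sandwich $V^{\pi^*(\hat{P})}_P - V^{\pi^*(\hat{P})}_{\hat{P}} \le V^{\pi^*(P)}_P - V^{\pi^*(\hat{P})}_{\hat{P}} \le V^{\pi^*(P)}_P - V^{\pi^*(P)}_{\hat{P}}$, where the two outer inequalities follow from $V^{\pi^*(P)}_P \ge V^{\pi^*(\hat{P})}_P$ and $V^{\pi^*(\hat{P})}_{\hat{P}} \ge V^{\pi^*(P)}_{\hat{P}}$ respectively, by definition of optimality under each kernel. Both extremes are of the fixed-policy form already handled by the first inequality, applied with $\pi=\pi^*(\hat{P})$ and $\pi=\pi^*(P)$. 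Since each extreme has infinity norm at most $\gamma\delta_g V_{\max}/(1-\gamma)$, the same bound propagates componentwise to $V^{\pi^*(P)}_P - V^{\pi^*(\hat{P})}_{\hat{P}}$, hence in $\|\cdot\|_\infty$.

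The main obstacle I anticipate is simply nailing the constant: under the usual convention $TV(p,q) = \tfrac{1}{2}\|p-q\|_1$, a naive H\"older step produces $2\delta_g V_{\max}$ instead of $\delta_g V_{\max}$. To match the lemma exactly one must invoke the mean-zero property of $P(\cdot|s,a)-\hat{P}(\cdot|s,a)$ and bound the kernel-gap term by the half-range (span) of $V^\pi_{\hat{P}}$ rather than its sup-norm, using that $V^\pi_{\hat{P}}(s) \in [0, V_{\max}]$ with $V_{\max} = r_{\max}/(1-\gamma)$. Everything else is a mechanical rearrangement.
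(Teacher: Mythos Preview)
Your proposal is correct and matches the paper's proof essentially step for step: the paper also subtracts the two Bellman equations, inserts $\pm\,\gamma P^\pi(s)V^\pi_{\hat P}$ to split into a contraction term and a kernel-gap term, shifts $V^\pi_{\hat P}$ by the constant vector $\tfrac{V_{\max}}{2}\cdot\mathbf{1}$ (exactly the midpoint trick you describe) before applying H\"older, and then rearranges. For the optimal-value claim the paper uses the same optimality sandwich you propose, written as the two one-sided chains $V_P^{\pi^*(P)}\le V_{\hat P}^{\pi^*(P)}+\epsilon\le V_{\hat P}^{\pi^*(\hat P)}+\epsilon$ and its symmetric counterpart.
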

\begin{proof} The first part of this proof mostly follows similar steps in \cite{Sun2021SimulationLemma}.
    For any state $s\in\mathcal{S}$ and for any policy $\pi$, let $r^{\pi}(s)$ denote $r(s,\pi(s))$, and $P^{\pi}(s)$ denote $P(\cdot|s,\pi(s))$ (as a row vector):
    \begin{align*}
        \Big|V^{\pi}_{P}(s)-V_{\hat{P}}^{\pi}(s)\Big|
    &= \Big|r^{\pi}(s)+\gamma P^{\pi}(s)V_{P}^{\pi} - (r^{\pi}(s)+\gamma \hat{P}^{\pi}(s)V_{\hat{P}}^{\pi})\Big|\\
    &=\gamma\Big|P^{\pi}(s)V_{P}^{\pi}-\hat{P}^{\pi}(s)V_{\hat{P}}^{\pi}\Big|\\
    &= \gamma \Big|P^{\pi}(s)V_{P}^{\pi}-P^{\pi}(s)V_{\hat{P}}^{\pi}+ P^{\pi}(s)V_{\hat{P}}^{\pi}-\hat{P}^{\pi}(s)V_{\hat{P}}^{\pi}\Big|\\
    &\leq \gamma\Big|P^{\pi}(s)\Big(V_{P}^{\pi}-V_{\hat{P}}^{\pi}\Big)\Big| + \gamma\Big|\Big(P^{\pi}(s)-\hat{P}^{\pi}(s)\Big)V_{\hat{P}}^{\pi}\Big|\\
    &\leq \gamma\Big\|V_P^{\pi}-V_{\hat{P}}^{\pi}\Big\|_{\infty} + \gamma\Big|\Big(P^{\pi}(s)-\hat{P}^{\pi}(s)\Big)V_{\hat{P}}^{\pi}\Big|\\
    &=\gamma\Big\|V_P^{\pi}-V_{\hat{P}}^{\pi}\Big\|_{\infty} + \gamma\Big|\Big(P^{\pi}(s)-\hat{P}^{\pi}(s)\Big)\Big(V_{\hat{P}}^{\pi}-\frac{V_{\max}}{2}\cdot\bm{1}\Big)\Big|\\
    &\leq \gamma \Big\|V_P^{\pi}-V_{\hat{P}}^{\pi}\Big\|_{\infty}+\gamma\Big\|\Big(P^{\pi}(s)-\hat{P}^{\pi}(s)\Big)\Big\|_1\Big\|V_{\hat{P}}^{\pi}-\frac{V_{\max}}{2}\Big\|_{\infty}\\
    &\leq \gamma\Big\|V_P^{\pi}-V_{\hat{P}}^{\pi}\Big\|_{\infty}+\gamma\delta_g V_{\max}.
    \end{align*}
Since the above inequality holds for all $s\in\mathcal{S}$, we have that
\begin{align*}
    \Big\|V_P^{\pi}-V_{\hat{P}}^{\pi}\Big\|_{\infty} &\leq \gamma\Big\|V_P^{\pi}-V_{\hat{P}}^{\pi}\Big\|_{\infty} + \gamma\delta_g V_{\max},\\
    \Big\|V_P^{\pi}-V_{\hat{P}}^{\pi}\Big\|_{\infty} &\leq \frac{\gamma\delta_g V_{\max}}{1-\gamma}.
\end{align*}
Let us now prove the second claim. For any state $s$ in the lower-level MDP, by taking $\pi=\pi^*(P)$,  we have
$$V_P^{\pi^*(P)}(s)\leq V_{\hat{P}}^{\pi^*(P)}(s)+\frac{\gamma\delta_gV_{\max}}{(1-\gamma)}\leq V_{\hat{P}}^{\pi^*(\hat{P})}(s)+\frac{\gamma\delta_gV_{\max}}{(1-\gamma)},$$
where the 2nd inequality is due to  $V_{\hat{P}}^{\pi^*(P)} (s)\leq  V_{\hat{P}}^{\pi^*(\hat{P})}(s)$. By symmetry, we also have  
$$V_{\hat{P}}^{\pi^*(\hat{P})}(s)\leq V_{{P}}^{\pi^*(\hat{P})}(s)+\frac{\gamma\delta_gV_{\max}}{(1-\gamma)}\leq V_{{P}}^{\pi^*({P})}(s)+\frac{\gamma\delta_gV_{\max}}{(1-\gamma)},$$ thus the 2nd claim follows. 
\end{proof}
\begin{lemma}[Propagated Reward Error Bound]\label{reward-simulation lemma}
We assume $\forall (s,a)\in\mathcal{S}\times\mathcal{A}$, the total variance distance between the empirical kernel $\hat{P}$ and the ideally configured kernel $P_c$ is bounded by $TV(P_c(\cdot|s,a),\hat{P}(\cdot|s,a))\leq \delta_g+\delta_c.$ {We assume that for any corresponding pair $ (P_c,\hat{P})$, the higher-order transition kernel works the same way, i.e. $Q(s(P_c')|s(P_c),b)=Q(s(\hat{P}')|s(\hat{P}),b),\forall (P_c,\hat{P}),(P'_c,\hat{P}')\in\mathcal{P}_c\times\hat{\mathcal{P}},\forall b\in\mathcal{B}$, and that the cost function works the same way for $P_c$ and $\hat{P}$, i.e. $C(s(P_c),b)=C(s(\hat{P}),b)$.}
Then for any higher-order deterministic policy $\Theta:\mathcal{P}_c\rightarrow \mathcal{B}$, we have the error bound for the upper-level reward function:
\begin{align}
    \|R_Q^{\Theta}-\hat{R}_Q^{\Theta}\|_{\infty} \leq \frac{\gamma(\delta_g +\delta_c)V_{\max}\|\mu_0\|_{\infty}}{1-\gamma},
\end{align}
where the elements of $\hat{R}_Q^{\Theta}\in\mathbb{R}^m$ are of the form $R_Q(s(\hat{P}), \Theta(s(\hat{P})))$ (see upper-level reward function definition (\ref{upper-level reward})), the elements of $R_Q^{\Theta}\in\mathbb{R}^m$ are of the form $R_Q(s(P_c), \Theta(s(P_c)))$, and $m$ is the number of states in the upper-level MDP.
\end{lemma}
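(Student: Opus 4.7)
The plan is to use Lemma \ref{simulation-lemma} as a black box and reduce the stated upper-level error bound to a coordinatewise bound on the expected lower-level return $J(\pi^*, P) = \mu_0^T V^{\pi^*}_P$. I would proceed in the $\ell_\infty$ norm: fix a pair of corresponding upper-level states $s(P_c)$ and $s(\hat{P})$ and the common model-changing action $b := \Theta(s(P_c)) = \Theta(s(\hat{P}))$, and expand the two entries $R_Q(s(P_c), b)$ and $R_Q(s(\hat{P}), b)$ using the definition of the upper-level reward in (\ref{upper-level reward}).

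Next, I would eliminate the structural terms. The penalty terms $C(s(P_c), b)$ and $C(s(\hat{P}), b)$ cancel by the assumption that the cost function agrees on corresponding pairs. The assumption $Q(s(P'_c)|s(P_c), b) = Q(s(\hat{P}')|s(\hat{P}), b)$ for every pair of corresponding successors lets me fuse the two successor sums into one sum indexed by pairs $(P'_c, \hat{P}')$, weighted by the common probability $Q(\cdot|s(P_c), b)$. What remains is a convex combination of return differences $J(\pi^*(P'_c), P'_c) - J(\pi^*(\hat{P}'), \hat{P}')$, one for each pair of corresponding successor kernels.

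I would then bound each single return difference by $|\mu_0^T(V^{\pi^*(P'_c)}_{P'_c} - V^{\pi^*(\hat{P}')}_{\hat{P}'})| \leq \|\mu_0\|_\infty \, \|V^{\pi^*(P'_c)}_{P'_c} - V^{\pi^*(\hat{P}')}_{\hat{P}'}\|_1$ (this is where the $\|\mu_0\|_\infty$ prefactor in the statement enters), and then invoke the second part of Lemma \ref{simulation-lemma}, with total-variation budget $\delta_g + \delta_c$ between $P'_c$ and $\hat{P}'$, to bound the value-function gap by $\gamma(\delta_g + \delta_c) V_{\max} / (1-\gamma)$. Because this successor-pair bound is uniform across successors, averaging against the probabilities $Q(\cdot|s(P_c), b)$ preserves it. Taking the maximum over $s(P_c)$ then gives the claimed $\ell_\infty$ bound.

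The main obstacle is essentially bookkeeping, not analysis: the lemma implicitly relies on a bijective pairing between ideal kernels $P_c \in \mathcal{P}_c$ and their empirical counterparts $\hat{P} \in \hat{\mathcal{P}}$, and both structural assumptions (on $Q$ and $C$) must be applied against this pairing so that the cost and kernel terms cancel cleanly before the return differences are bounded. A minor cosmetic point is that, since $\mu_0$ is a probability distribution, the sharper prefactor $\|\mu_0\|_1 = 1$ is in fact available; the weaker $\|\mu_0\|_\infty$ factor in the stated bound is loose but still correct, so no tightening is needed to recover the lemma as written.
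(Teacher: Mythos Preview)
Your outline mirrors the paper's proof almost step for step: expand via the definition (\ref{upper-level reward}), cancel the cost terms using the assumption on $C$, fuse the two successor sums into one using the assumption on $Q$, pull out a convex combination of $J$-differences, and reduce each of those to Lemma~\ref{simulation-lemma}. That is exactly what the paper does.

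There is, however, a norm mismatch in your reduction. You bound $|\mu_0^T(V-V')|\le \|\mu_0\|_\infty\,\|V-V'\|_1$ by H\"older, which is valid, but Lemma~\ref{simulation-lemma} controls $\|V-V'\|_\infty$, not $\|V-V'\|_1$; bridging the two would cost an extra factor of $n$ that the stated bound does not carry. Your closing remark also has the inequality the wrong way around: since $\mu_0$ is a probability vector, $\|\mu_0\|_\infty\le\|\mu_0\|_1=1$, so the $\|\mu_0\|_\infty$ prefactor makes the stated bound \emph{tighter}, not looser, than what the honest pairing $\|\mu_0\|_1\|V-V'\|_\infty$ would give. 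For what it is worth, the paper's own proof writes $\|\mu_0\|_\infty\|V-V'\|_\infty$ at this step, which is likewise not a valid H\"older pairing; so your proposal and the paper share the same slip, just with different norms on the value side. The clean route is $|\mu_0^T(V-V')|\le \|\mu_0\|_1\|V-V'\|_\infty=\|V-V'\|_\infty$ followed by Lemma~\ref{simulation-lemma}, which recovers the bound without the $\|\mu_0\|_\infty$ factor.
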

\begin{proof}
    For a lower-level transition kernel $P$, we define $J(\pi^*(P), P)$ as the ``reward'' $J$ reported by the lower-level MDP to the upper-level MDP (as $J$ in Algorithm \ref{value iteration}) if the lower-level MDP has transition kernel $P$ and adopts the optimal policy $\pi^*(P)$.
    For any higher-order policy $\Theta$, and for any $P_c$ and its corresponding estimate $\hat{P}$, with the definition of the upper-level reward function (\ref{upper-level reward}), we have that
    \begin{align*}
       & \Big|R_Q(s(P_c),\Theta(s(P_c))) - R_Q(s(\hat{P}), \Theta(s(\hat{P})))\Big|\\
        &=\Big|\sum_{s(P_c')}Q(s(P_c')|s(P_c),\Theta(s(P_c)))J(\pi^*(P_c'), P_c') -\\ 
        &~~  \sum_{s(\hat{P}')}Q(s(\hat{P}')|s(\hat{P}),\Theta(s(\hat{P})))J(\pi^*(\hat{P}'),\hat{P}'))\Big|\\
        &= \Big|\sum_{s(P'_c)=s(\hat{P}')}Q(s(P_c')|s(P_c),\Theta(s(P_c)))\Big(  J(\pi^*(P_c'), P_c')           -J(\pi^*(\hat{P}'),\hat{P}'     \Big)\Big|\\
        &\leq \Big|\sum_{s(P'_c)}Q(s(P_c')|s(P_c),\Theta(s(P_c)))\Big|\max_{P'_c,\hat{P}',s(P_c')=s(\hat{P}')}\Big|J(\pi^*(P_c'),P_c')-J(\pi^*(\hat{P}'),\hat{P}')\Big|\\
        &= \max_{P'_c,\hat{P}',s(P_c')=s(\hat{P}')}\Big|  J(\pi^*(P_c'),P_c')-J(\pi^*(\hat{P}'),\hat{P}')  \Big|\\
        &= \max_{P'_c,\hat{P}',s(P_c')=s(\hat{P}')}\Big| \mu_0^T \Big(V^{\pi^*(P_c')}_{P_c'}-V^{\pi^*(\hat{P}')}_{\hat{P}'}\Big)\Big|\\
        &\leq \max_{P'_c,\hat{P}',s(P_c')=s(\hat{P}')} \|\mu_0\|_{\infty}\Big\| \Big(V^{\pi^*(P_c')}_{P_c'}-V^{\pi^*(\hat{P}')}_{\hat{P}'}\Big)\Big\|_{\infty}\\
        &\leq \frac{\gamma(\delta_g +\delta_c)V_{\max}\|\mu_0\|_{\infty}}{1-\gamma}.
    \end{align*}
   Note that in the derivations above, we have $s(P_c')=s(\hat{P}')$ due to the assumption that these two lower-level kernels have the same state representation in the upper-level MDP. The last inequality can be achieved by directly applying Lemma $\ref{simulation-lemma}$. Since the above inequality holds for all the ideal-estimated pair $(P_c, \hat{P})\in\mathcal{P}_c\times\hat{\mathcal{P}}$, we have that 
\begin{align*}
    \|R_Q^{\Theta}-\hat{R}_Q^{\Theta}\|_{\infty} \leq \frac{\gamma(\delta_g +\delta_c)V_{\max}\|\mu_0\|_{\infty}}{1-\gamma}.
\end{align*}
\end{proof}

We now present the error gap for the achievable upper-level state-value function, due to configuration error and estimation errors. We compare the state-value function under ideal configurations and ideal estimation against the state-value function under configuration errors and estimation errors.

For simplicity of presentation, in this lemma and its proof, for the upper-level MDP, we use lower-level transition kernel $P$ (or $P_c$, $\hat{P}$) and its upper-level state representation $s(P)$  (or $s(P_c)$, $s(\hat{P})$) interchangeably. 

\begin{lemma}[Error bound Lemma of Bi-level MDPs]\label{two-level MDP error bound}
 If $\forall (s,a)\in\mathcal{S}\times\mathcal{A}$, the total variance distance between the empirical kernel $\hat{P}$ and the ideally configured kernel $P_c$ is bounded by $TV(P_c(\cdot|s,a),\hat{P}(\cdot|s,a))\leq \delta_g+\delta_c$, and if for $\forall (P,b)\in\mathcal{P}\times\mathcal{B}$, the total variance distance between the ground-truth higher-order kernel $Q$ and the empirical higher-order kernel $\hat{Q}$ is bounded by $TV(Q(\cdot|P,b), \hat{Q}(\cdot|P,b))\leq \Delta$, then for any higher-order policy $\Theta:\mathcal{P}\rightarrow\mathcal{B}$, we have that
 $$\|W^{\Theta}_{Q}-W^{\Theta}_{\hat{Q}}\|_{\infty}\leq \frac{\gamma(\delta_g +\delta_c)V_{\max}\|\mu_0\|_{\infty}}{(1-\gamma)(1-\lambda)} + \frac{2\Delta\cdot\|\mu_0\|_{\infty}V_{\max}+2\lambda\Delta \cdot W_{\max}}{1-\lambda},$$
where $W_{\max}=\frac{R_{\max}}{1-\lambda}$.
\end{lemma}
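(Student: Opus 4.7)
The plan is to mimic the contraction/fixed-point style of Lemma \ref{simulation-lemma} but at the upper-level, while using Lemma \ref{reward-simulation lemma} as a black box to absorb the lower-level propagation error. First I would write the two Bellman equations that $W^{\Theta}_{Q}$ and $W^{\Theta}_{\hat{Q}}$ satisfy,
\begin{align*}
W^{\Theta}_{Q}(P) &= R_{Q}(P,\Theta(P)) + \lambda \sum_{P'}Q(P'|P,\Theta(P))\,W^{\Theta}_{Q}(P'),\\
W^{\Theta}_{\hat{Q}}(P) &= \hat{R}_{\hat{Q}}(P,\Theta(P)) + \lambda \sum_{P'}\hat{Q}(P'|P,\Theta(P))\,W^{\Theta}_{\hat{Q}}(P'),
\end{align*}
where $\hat{R}_{\hat{Q}}(P,b)=\sum_{P'}\hat{Q}(P'|P,b)\,J(\pi^{*}(\hat{P}'),\hat{P}')-C(P,b)$, and then bound $|W^{\Theta}_{Q}(P)-W^{\Theta}_{\hat{Q}}(P)|$ by a reward gap plus a transition gap.

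Next, for the reward gap I would insert the hybrid quantity $\hat{R}_{Q}(P,b)=\sum_{P'}Q(P'|P,b)\,J(\pi^{*}(\hat{P}'),\hat{P}')-C(P,b)$ and split
$R_{Q}-\hat{R}_{\hat{Q}}=(R_{Q}-\hat{R}_{Q})+(\hat{R}_{Q}-\hat{R}_{\hat{Q}})$. The first piece is exactly what Lemma \ref{reward-simulation lemma} bounds, giving $\tfrac{\gamma(\delta_{g}+\delta_{c})V_{\max}\|\mu_{0}\|_{\infty}}{1-\gamma}$. The second piece equals $\sum_{P'}[Q(P'|P,b)-\hat{Q}(P'|P,b)]\,J(\pi^{*}(\hat{P}'),\hat{P}')$ and is controlled by H\"older's inequality: the $\ell_{1}$ row-difference $\|Q(\cdot|P,b)-\hat{Q}(\cdot|P,b)\|_{1}\le 2\Delta$ multiplied by $|J(\pi^{*}(\hat{P}'),\hat{P}')|\le\|\mu_{0}\|_{\infty}V_{\max}$ yields the term $2\Delta\|\mu_{0}\|_{\infty}V_{\max}$.

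For the transition gap I would perform the standard add-and-subtract $\sum_{P'}Q(P'|P,b)W^{\Theta}_{\hat{Q}}(P')$ to write
\begin{align*}
\sum_{P'}\!\bigl[Q\,W^{\Theta}_{Q}-\hat{Q}\,W^{\Theta}_{\hat{Q}}\bigr]
=\sum_{P'}[Q-\hat{Q}]\,W^{\Theta}_{\hat{Q}}(P')+\sum_{P'}Q\,\bigl[W^{\Theta}_{Q}(P')-W^{\Theta}_{\hat{Q}}(P')\bigr].
\end{align*}
The first summand is bounded by $2\Delta W_{\max}$ using $\|Q-\hat{Q}\|_{1}\le 2\Delta$ and $\|W^{\Theta}_{\hat{Q}}\|_{\infty}\le W_{\max}$; the second, being an expectation under a probability kernel, is at most $\|W^{\Theta}_{Q}-W^{\Theta}_{\hat{Q}}\|_{\infty}$. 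Multiplying by $\lambda$ gives the contractive term $\lambda\|W^{\Theta}_{Q}-W^{\Theta}_{\hat{Q}}\|_{\infty}$ together with $2\lambda\Delta W_{\max}$.

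Finally, combining the two gaps, taking the supremum over $P\in\mathcal{P}$, and rearranging the $\lambda$-contraction term to the left-hand side yields
\begin{align*}
(1-\lambda)\|W^{\Theta}_{Q}-W^{\Theta}_{\hat{Q}}\|_{\infty}\le \frac{\gamma(\delta_{g}+\delta_{c})V_{\max}\|\mu_{0}\|_{\infty}}{1-\gamma}+2\Delta\|\mu_{0}\|_{\infty}V_{\max}+2\lambda\Delta W_{\max},
\end{align*}
which is exactly the claimed bound after dividing by $1-\lambda$. I expect the only subtlety, and hence the main obstacle, to be the three-way reward decomposition: one must carefully insert the hybrid $\hat{R}_{Q}$ so that Lemma \ref{reward-simulation lemma} applies to one piece and the upper-level $Q\leftrightarrow\hat{Q}$ perturbation is cleanly isolated in the other; everything else is a routine H\"older-plus-geometric-series argument of the same flavor as Lemma \ref{simulation-lemma}.
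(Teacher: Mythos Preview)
Your proposal is correct and follows essentially the same approach as the paper: the paper also splits via the Bellman equation into a reward gap and a transition gap, handles the reward gap by inserting the hybrid $R_{Q}(\hat{P},\Theta(\hat{P}))$ so that Lemma~\ref{reward-simulation lemma} controls one piece and the $Q\leftrightarrow\hat{Q}$ H\"older bound gives the $2\Delta\|\mu_0\|_\infty V_{\max}$ piece, and handles the transition gap by the same add-and-subtract to isolate $2\Delta W_{\max}$ and the contractive $\lambda\|W^{\Theta}_{Q}-W^{\Theta}_{\hat{Q}}\|_\infty$. The only cosmetic difference is that the paper carries around the pair $(P_c,\hat{P})$ with the identification $s(P_c)=s(\hat{P})$ (adding a trivially-zero term $|Q^{\Theta}(P_c)W^{\Theta}_Q-Q^{\Theta}(\hat{P})W^{\Theta}_Q|$), whereas you work directly with a single upper-level state index; the arithmetic is otherwise line-for-line the same.
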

\begin{proof}
     For any ideal $P_c$, and its estimate $\hat{P}$, using Bellman equation $W^{\Theta}_{Q}=R_{Q}(P_c,\Theta(P_c))+\lambda Q^{\Theta}(P_c)W^{\Theta}_Q$, we have
\begin{align}\label{goal-lemma3}
\Big|W_{Q}^{\Theta}(P_c)-W_{\hat{Q}}^{\Theta}(\hat{P})\Big| &\leq  \underbrace{\Big|R_{Q}(P_c, \Theta(P_c)) - R_{\hat{Q}}(\hat{P},\Theta(\hat{P}))\Big|}_{\scircled{1}} + \underbrace{\lambda\Big|Q^{\Theta}(P_c)W^{\Theta}_Q-\hat{Q}^{\Theta}(\hat{P})W^{\Theta}_{\hat{Q}}\Big|}_{\scircled{2}},
\end{align}
where $Q^{\Theta}(P_c)=Q(\cdot|P_c,\Theta(P_c))$, and $\hat{Q}^{\Theta}(\hat{P})=\hat{Q}(\cdot|\hat{P},\Theta(\hat{P}))$. 

To bound $\scircled{1}$, we have the following derivations. Notice that $Q^{\Theta}(\cdot)\in\mathbb{R}^m$ and $\hat{Q}^{\Theta}(\cdot)\in\mathbb{R}^m$ are both row vectors, and $Q^{\Theta}(P_c)=Q(\cdot|P_c,\Theta(P_c))$ means the ground-truth distribution of the next upper-level state given the current state is $P_c$ and the agent follows the higher-order policy $\Theta$. Similarly, $\hat{Q}^{\Theta}(\hat{P})=\hat{Q}(\cdot|\hat{P},\Theta(\hat{P}))$ is the empirical distribution of the next upper-level state given the current state $\hat{P}$ and the agent follows the higher-order policy $\Theta$. 
\begin{align*}
&\ \Big|R_{Q}(P_c, \Theta(P_c)) - R_{\hat{Q}}(\hat{P},\Theta(\hat{P}))\Big|\\
&\leq \Big|R_{Q}(P_c, \Theta(P_c)) - R_{Q}(\hat{P}, \Theta(\hat{P}))\Big| + \Big|R_{Q}(\hat{P}, \Theta(\hat{P})) - R_{\hat{Q}}(\hat{P},\Theta(\hat{P}))\Big|\\
&\leq \frac{\gamma(\delta_g +\delta_c)V_{\max}\|\mu_0\|_{\infty}}{1-\gamma}+ \Big|R_{Q}(\hat{P}, \Theta(\hat{P})) - R_{\hat{Q}}(\hat{P},\Theta(\hat{P}))\Big|\ \text{(note: by Lemma \ref{reward-simulation lemma})}\\
&= \frac{\gamma(\delta_g +\delta_c)V_{\max}\|\mu_0\|_{\infty}}{1-\gamma} + \Big|\sum_{\hat{P}'}(Q-\hat{Q})(\hat{P}'|\hat{P},\Theta(\hat{P}))J(\pi^*(\hat{P}'),\hat{P}')\Big|\\
&\leq \frac{\gamma(\delta_g +\delta_c)V_{\max}\|\mu_0\|_{\infty}}{1-\gamma} + \Big|\sum_{\hat{P}'}(Q-\hat{Q})(\hat{P}'|\hat{P},\Theta(\hat{P}))\Big|\max_{\hat{P}'}\Big|J(\pi^*(\hat{P}'), \hat{P}')\Big|\\
&=\frac{\gamma(\delta_g +\delta_c)V_{\max}\|\mu_0\|_{\infty}}{1-\gamma} + 2\Delta\|\mu_0\|_{\infty}V_{\max}.
\end{align*}
To bound $\scircled{2}$, we have that (we disregard $\lambda$ for now)
\begin{align*}
    \Big|Q^{\Theta}(P_c)W^{\Theta}_Q-\hat{Q}^{\Theta}(\hat{P})W^{\Theta}_{\hat{Q}}\Big| &\leq \underbrace{\Big|Q^{\Theta}(P_c)W^{\Theta}_Q-Q^{\Theta}(\hat{P})W^{\Theta}_Q\Big|}_{0}
+\Big|Q^{\Theta}(\hat{P})W^{\Theta}_Q-\hat{Q}^{\Theta}(\hat{P})W^{\Theta}_{\hat{Q}}\Big|\\
    &=\Big|Q^{\Theta}(\hat{P})W^{\Theta}_Q-\hat{Q}^{\Theta}(\hat{P})W^{\Theta}_{\hat{Q}}\Big|\\
    &\leq \Big|Q^{\Theta}(\hat{P})W^{\Theta}_Q - Q^{\Theta}(\hat{P})W^{\Theta}_{\hat{Q}}\Big|+\Big| Q^{\Theta}(\hat{P})W^{\Theta}_{\hat{Q}} - \hat{Q}^{\Theta}(\hat{P})W^{\Theta}_{\hat{Q}}\Big|\\
    &\leq \Big|Q^{\Theta}(\hat{P})\Big(W^{\Theta}_Q(P_c)-W^{\Theta}_{\hat{Q}}(\hat{P})\Big)\Big| +\Big\|(Q^{\Theta}-\hat{Q}^{\Theta})(\hat{P})\Big\|_1\Big\|W^{\Theta}_{\hat{Q}}\Big\|_{\infty}\\
    &= \Big|Q^{\Theta}(\hat{P})\Big(W^{\Theta}_Q(P_c)-W^{\Theta}_{\hat{Q}}(\hat{P})\Big)\Big| + 2\Delta W_{\max}\\
    &\leq \Big\|Q^{\Theta}(\hat{P})\Big\|_{1}\Big\|W^{\Theta}_{Q}(P_c)-W^{\Theta}_{\hat{Q}} (\hat{P})\Big\|_{\infty} + 2\Delta W_{\max}\\
    &= \Big\|W^{\Theta}_{Q}(P_c)-W^{\Theta}_{\hat{Q}} (\hat{P})\Big\|_{\infty} + 2\Delta W_{\max},
\end{align*}
where the first term on the righthand side of the first inequality is $0$ because with $s(P_c)=s(\hat{P})$, $Q^{\Theta}(P_c)=Q^{\Theta}(\hat{P})$.

By reorganizing terms, our goal (\ref{goal-lemma3}) is bounded by:
\begin{align*}
    \Big|W_{Q}^{\Theta}(P_c)-W_{\hat{Q}}^{\Theta}(\hat{P})\Big| &\leq \underbrace{\frac{\gamma(\delta_g +\delta_c)V_{\max}\|\mu_0\|_{\infty}}{1-\gamma} + 2\Delta\|\mu_0\|_{\infty}V_{\max}}_{\scircled{1}} \\
    &+ \lambda\underbrace{\Big\|W^{\Theta}_{Q}(P_c)-W^{\Theta}_{\hat{Q}} (\hat{P})\Big\|_{\infty} + 2\lambda\Delta W_{\max}}_{\scircled{2}}.
\end{align*}
Since the above inequality holds for all $(P_c,\hat{P})\in\mathcal{P}_c\times\hat{\mathcal{P}}$, we have
\begin{align*}
    \Big\|W^{\Theta}_{Q}-W^{\Theta}_{\hat{Q}}\Big\|_{\infty} \leq \frac{\gamma(\delta_g +\delta_c)V_{\max}\|\mu_0\|_{\infty}}{(1-\gamma)(1-\lambda)} + \frac{2\Delta\cdot\|\mu_0\|_{\infty}V_{\max}+2\lambda\Delta \cdot W_{\max}}{1-\lambda}.
\end{align*}
\end{proof}
\section{Numerical experiments}
We present two synthetic numerical examples to describe our approaches for solving both the bi-level configurable MDP and the cost-constrained optimization problem formulated on the specific TVCMDP framework. We also conduct experiments in large-scale environments whose dynamics can be explicitly controlled by configuring kernel parameters, including the Cartpole benchmark \cite{brockman2016openai} and Block-world \cite{RusselNorvig} environment. In the Cartpole experiment, we employ Deep Q-networks (DQN) to learn lower-level policies corresponding to each uniquely parameterized (configured) environment, while value iteration is used to optimize the upper-level environment parameterization (configuration). Conversely, in the Block-world experiment, we use value iteration to compute the lower-level optimal state-values for each discretized parameter setting, and use DQN in the upper-level to optimize the kernel parameter. These results demonstrate that our proposed framework is adaptable to more complex and continuous RL environments, and that it has potential in realistic applications.

\textbf{Continuous configuration on TVCMDP}:
We compose a synthetic TVCMDP as described in section \ref{prob_formulation}, \emph{special case}. The composed TVCMDP has $3$ states and $2$ actions with number of episodes $K=2$. In episodes $k=1$ and $k=2$, the time-varying kernels $P_1$ and $P_2$ are different. The details of the numerical TVCMDP setting is the following: We are considering a time-varying configurable MDP with the state space $\mathcal{S}=\{0,1,2\}$, action space $\mathcal{A}=\{\text{left, right, stay}\}$, and the number of episodes is $K=2$. The time varying transition kernel $P_1$ and $P_3$ in episodes $k=1$ and $k=2$ are, respectively,
\begin{align*}
P^{(l)}_1 = 
\begin{bmatrix}
0   & 0.15 & 0.85 \\
0.75 & 0   & 0.25 \\
0.25 & 0.75 & 0
\end{bmatrix},
\quad
P^{(r)}_1 = 
\begin{bmatrix}
0   & 0.85 & 0.15 \\
0.15 & 0   & 0.85 \\
0.85 & 0.15 & 0
\end{bmatrix},
\quad
P^{(s)}_1 = 
\begin{bmatrix}
0.9 & 0.05 & 0.05 \\
0.05 & 0.9 & 0.05 \\
0.05 & 0.05 & 0.9
\end{bmatrix}\\
P^{(l)}_2 = 
\begin{bmatrix}
0   & 0.45 & 0.55 \\
0.65 & 0   & 0.35 \\
0.45 & 0.55 & 0
\end{bmatrix},
\quad
P^{(r)}_2 = 
\begin{bmatrix}
0   & 0.75 & 0.25 \\
0.25 & 0   & 0.75 \\
0.85 & 0.15 & 0
\end{bmatrix},
\quad
P^{(s)}_2 = 
\begin{bmatrix}
0.8 & 0.1 & 0.1 \\
0.2 & 0.6 & 0.2 \\
0.05 & 0.05 & 0.9
\end{bmatrix}.
\end{align*}
For each action \( a \), the transition probabilities are given by the matrix \( P^{(a)}_i \in \mathbb{R}^{3 \times 3}, i=1,2 \), where the rows index the current state \( s \) and the columns index the next state \( s' \). The initial state distribution of every episode $\mu_0=[1/3,1/3,1/3]^T$ is uniform. The reward function $r$ remains the same in all episodes. $r(s,a)$ is defined for each state and action pair and is represented as the following matrix:
\begin{align*}
    r(s,a) = 
\begin{bmatrix}
10 & 5  & 1  \\
2  & 20 & 10 \\
20 & 4  & 40
\end{bmatrix}.
\end{align*}
Here, rows correspond to states, and columns correspond to actions. $\gamma=0.9$. The configuration budgets considered include $[0.5,\ 2.06,\ 3.61,\ 5.17,\ 6.72,\ 8.28,\ 9.83,\ 11.39,\ 12.94,\ 14.0]$.
\

We solve the cost-constrained optimization problem   (\ref{objective-TVCDMP}) based on this synthetic example, and optimize the configuration variables $\mathrm{x}_1\in[-1,1]^{3\times3}$ and $\mathrm{x}_2\in[-1,1]^{3\times3}$ under a sequence of budgets, with each budget within the range $[0.5,14]$. The optimally configured state value averages (blue curve) are shown in Figure \ref{fig:TVCMDP}. For comparison, we also included the baseline state-value averages (gray dotted line) without any configuration, and the randomly configured state-value averages (orange curve). We observe that the optimally configured approach obviously performs better than the baseline and the randomly configured approach, and the baseline average increases by $84\%$ under configuration. The random configuration variables added to the kernels still satisfy the constraints in (\ref{linearized-optimization-prob}), but we observe that incorrect configurations may even deteriorate baseline performance, as shown when $B=2$ or $B=14$. 

\textbf{Bi-level configurable MDP with model-changing actions}: To testify to the feasibility of our bi-level configurable MDP model, we conduct numerical experiments in three different environments. 

\emph{1.Synthetic Bi-level MDP}: We give the numeric settings of the synthetic bi-level MDP. In the lower-level MDP, the state space $\mathcal{S}$ consists of two dimensions: $(\text{price-level, portfolio})$. The price-level has three statuses: $\{0\text{:Low},1 \text{:Neutral}, 2\text{:High}\}$. Each price level corresponds to a price in $(90, 100, 130)$. The portfolio has two statuses: $\{0\text{:Cash}, 1\text{:Holding}\}$, so there are total $6$ states in $\mathcal{S}$. The lower-level action space $\mathcal{A}=\{\text{buy, sell}\}$. There are $3$ modes of lower-level transition kernels, or equivalently, $3$ states in the upper-level MDP, $P_1, P_2, P_3$, which respectively determine the price-level transitions during ``boom'', ``recession'', and ``stabilization''. We set them as
\begin{align*}
P_1=\begin{bmatrix}0.6 & 0.3 & 0.1\\
    0.4 & 0.4 & 0.2\\
    0.3 & 0.5 & 0.2   
    \end{bmatrix},
    \quad
    P_2 = \begin{bmatrix}
        0.2 & 0.5 & 0.3\\
        0.1 & 0.6 & 0.3\\
        0.05 & 0.25 & 0.7\\
    \end{bmatrix}
    \quad
    P_3 = \begin{bmatrix}
        0.2 & 0.6 & 0.2\\
        0.2 & 0.6 & 0.2\\
        0.1 & 0.5 & 0.4
    \end{bmatrix}.
\end{align*}
The transitions between the status of the portfolio depend on the action. If $s=(\cdot, 0)$ and $a=\text{buy}$, then $s'=(\cdot, 1)$; If $s=(\cdot, 1)$ and $a=\text{sell}$, then $s'=(\cdot, 0)$.
The reward function $r$ is 
\begin{align*}
    r(s,a)=\begin{cases}
        -1, \text{if } s=(\cdot, \text{ 0}),a =\text{ buy}\\
        \text{new price $-$ old price}-1, \text{if }s=(\cdot,1),a=\text{ sell}\\
        \text{new price $-$ old price}, \text{ if } s=(\cdot,1)
    \end{cases}
\end{align*}
$\gamma=0.95$. $\mu_0$ is uniformly $1/6$. In the upper-level, the state space $\mathcal{P}=\{P_1, P_2, P_3\}$, the action space is $\mathcal{B}=\{0\text{:Decrease rate, }1\text{:Increase rate}, 2\text{: Keep rate}\}$. The upper-level kernel governs the transitions between lower kernels $P_i$ and is given by:
\begin{align*}
    Q^{(De)}=\begin{bmatrix}
       0.7 & 0.2 & 0.1\\
       0.6 & 0.2 & 0.2\\
       0.7 & 0.1& 0.2
    \end{bmatrix}
    \quad
    Q^{(In)}=\begin{bmatrix}
        0.5 & 0.3 &0.2\\
        0.3 & 0.5 & 0.2\\
        0.4 & 0.4 &0.2
    \end{bmatrix}
    \quad
    Q^{(Ke)}=\begin{bmatrix}
        0.6 & 0.25 &0.15\\
        0.4 & 0.4 & 0.2\\
        0.2& 0.3 &0.5
    \end{bmatrix}
\end{align*}
$\lambda=0.95$. The upper-level reward $R$ is computed according to \ref{upper-level reward}. The configuration cost function is determined by the current upper-level state and configuration action, and we set it to be:
\begin{align*}
    C(P,b)=\begin{bmatrix}
        0.2 & 0.1 & 0.05\\
        0.5 & 0.3 &0.1\\
        0.3 & 0.2 & 0.1
    \end{bmatrix},
\end{align*}
with the rows index kernel mode, and the columns index upper-level actions.

In Figure \ref{fig:bi-level synthetic}, we show that throughout $100$ test episodes, the agent achieves a higher average return (green bar) when following the optimal upper-level policy derived from Algorithm \ref{value iteration}, compared to the two alternative approaches. For comparison, we also computed the average return under the non-configuration mode (gray bar), where the lower-level transition kernel is uniformly sampled over the $100$ test episodes and the agent just follows the corresponding primitive optimal policy within each episode, the random configuration mode (orange bar), where model-changing actions are chosen randomly, and the oracle (blue bar), where the lower-level kernel is fixed to be the optimal one and the agent uses the corresponding optimal policy over the $100$ test episodes. Our bi-level MDP configuration obviously shows better performance than the non-configuration mode and the random configuration mode, and its performance is the closest to the oracle, as expected.

\begin{figure}
    \centering
    \includegraphics[scale=0.4]{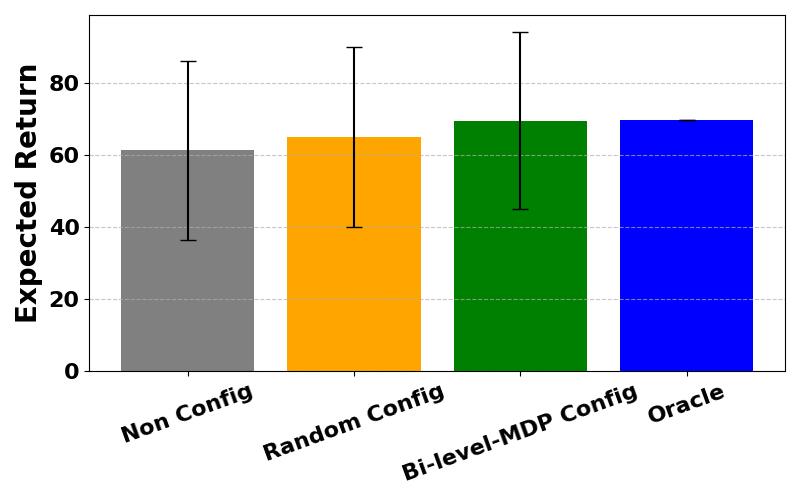}
    \caption{Bi-level configuration on the synthetic example}
    \label{fig:bi-level synthetic}
\end{figure}

\emph{2.Configure the Cartpole baseline}: We construct an upper-level MDP with $4$ discrete states by creating four lower-level Cartpole environments, each parametrized (configured) differently. The dynamics are determined by $(g,m_c,m_p,l_p)$: gravity, cart mass, pole mass, and pole length. The four parameter sets {\small$\{(9.8, 1.0, 0.1, 0.5), (9.8, 2.0, 0.1, 0.5), 
(9.8, 1.0, 0.2, 0.5),(9.8, 1.0, 0.1, 1.0)\}$}, correspond to $4$ different environments. For each, a DQN agent is trained for $400$ episodes to obtain a policy network. The upper-level reward is the performance of a lower-level policy evaluated in a new environment over $20$ episodes. The upper-level MDP has four model-changing actions, each deterministically switching to a target environment. Configuration cost depends on which parameter $\{m_c,m_p,l_p\}$ is modified, and is modeled with an exponential; $g$ is excluded due to high cost. Upper-level optimal values are computed via value iteration.Numerical results are shown in \ref{fig:config-cartpole}.
\begin{figure}[htbp]
    \centering
    \begin{subfigure}[b]{0.45\textwidth}
        \centering
        \includegraphics[width=\textwidth]{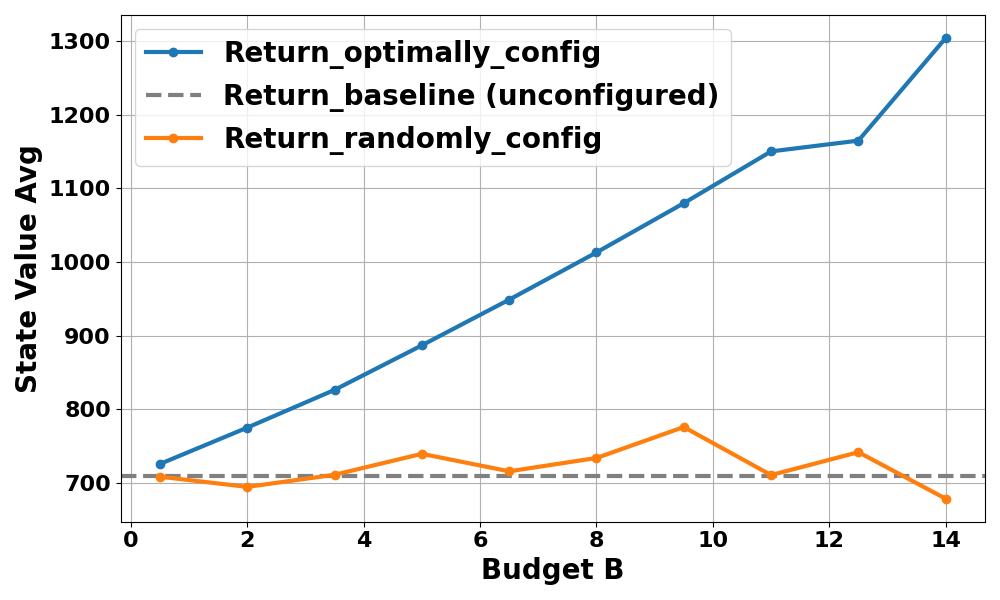}
    \caption{Optimal Continuous Configuration on TVCMDP}
\label{fig:TVCMDP}
    \end{subfigure}
    \hfill
    \begin{subfigure}[b]{0.45\textwidth}
        \centering
        \includegraphics[width=\textwidth]{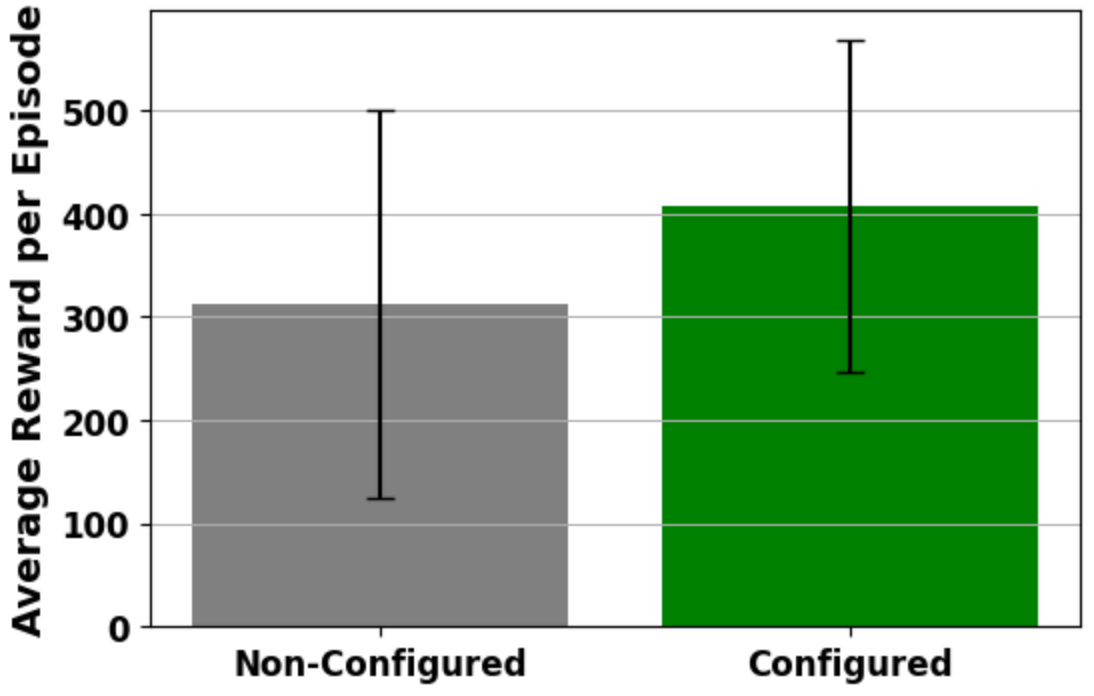}
    \caption{Bi-level configuration on the Cartpole environment}
    \label{fig:config-cartpole}
    \end{subfigure}

    \caption{Improved returns by continuous and bi-level MDP configurations}
    \label{fig:dqn_training}
\end{figure}

\emph{3.Configure the Block-World Environment}: The block-world environment is a grid-based game whose state transition is controlled by the parameter \emph{slip probability} parameter $\alpha$. When the agent chooses an action, there is a probability $\alpha$ that the agent will deviate from the intended direction, potentially moving in an orthogonal direction instead. In the upper-level MDP, the continuous values of $\alpha$ (as a proxy for the lower-level transition kernel) are treated as the upper-level state, and the agent can optimally adjust $\alpha$ using a DQN algorithm. We discretize the continuous parameter $\alpha\in[0,1]$ with $1000$ points, and the corresponding upper-level reward for each lower-level parameter setting is pre-calculated using offline value iteration before training the upper-level DQN. The cost function of configuration action $b$ is $C(\alpha, b)=\mathbb{E}_{Q(\alpha'|\alpha,b)}[\exp(|\alpha'-\alpha|)]$. The training performance of the upper-level DQN, and its evaluation results are presented in Figures \ref{fig:Config-Block-training} and \ref{Config-Block-test}.
\begin{figure}[htbp]
    \centering
    \begin{subfigure}[b]{0.49\textwidth}
        \centering
       \includegraphics[width=\textwidth]{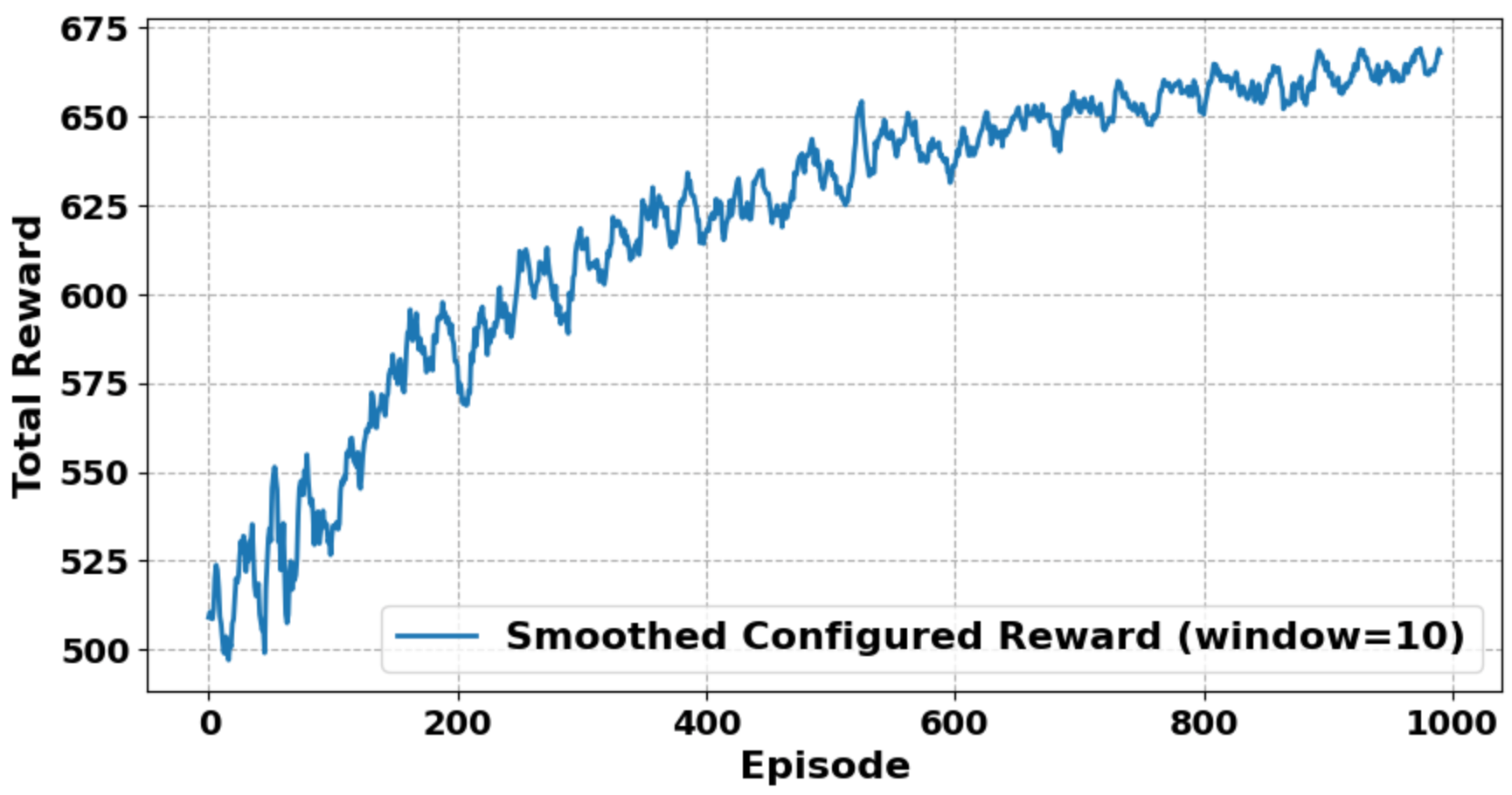}
        \caption{DQN Training Progress (Smoothed)}
        \label{fig:Config-Block-training}
    \end{subfigure}
    \hfill
    \begin{subfigure}[b]{0.49\textwidth}
        \centering
        \includegraphics[width=\textwidth]{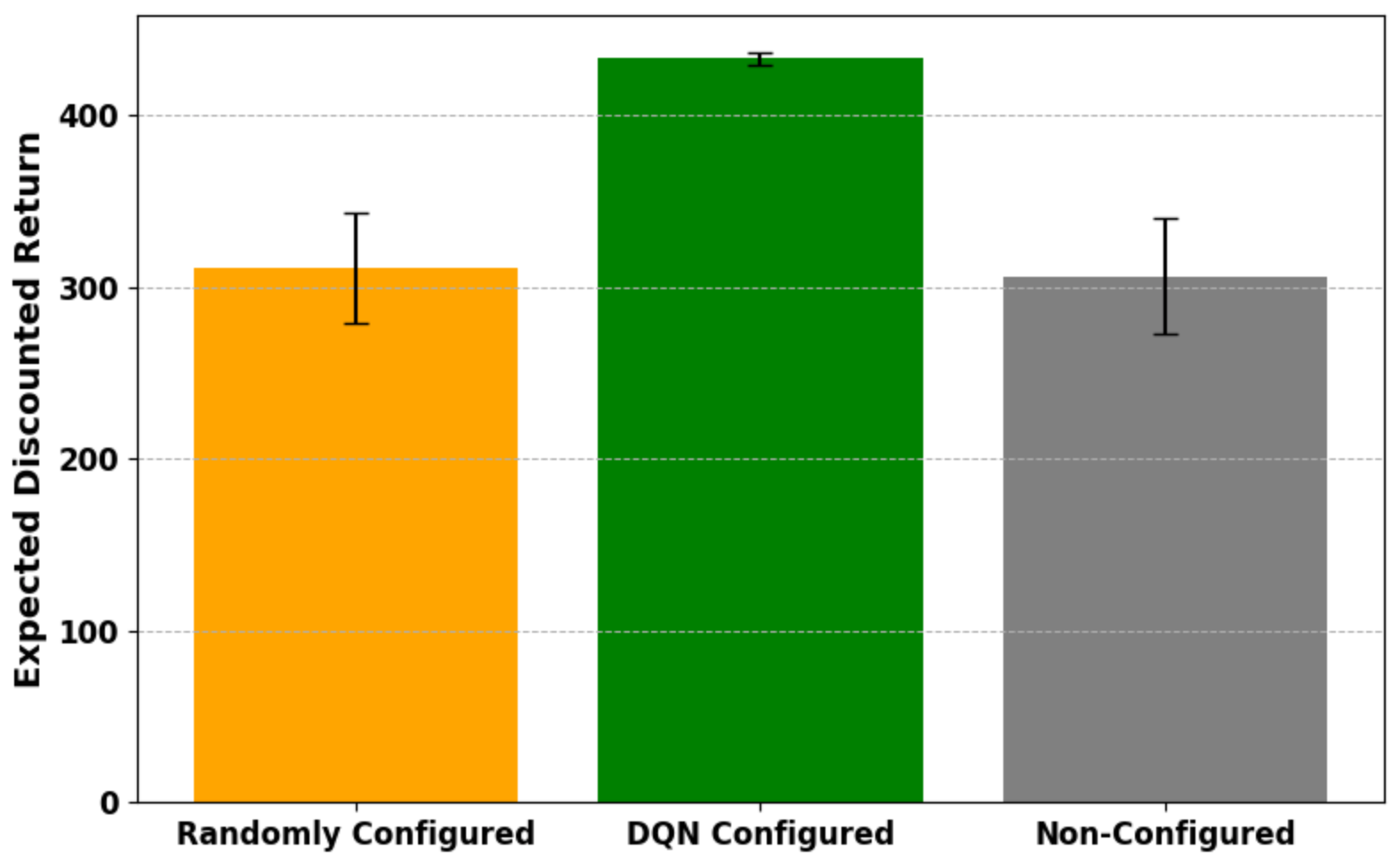}
        \caption{Test on the DQN configured environment}
        \label{Config-Block-test}
    \end{subfigure}

    \caption{Bi-level Block-world Configuration: Training and test performances}
    \label{fig:dqn_training}
\end{figure}

\appendix
\section{Jacobian of state-value functions}\label{Jacobian}
We denote $V^{\pi}_i=V^{\pi}$ as the $i$-th element of the state-values,
$P^{\pi}_i$ as the $i$-th row of $P^{\pi}$, and $P^{\pi}_{ij}$ as the $(i,j)$-th element of $P^{\pi}$. 

We assume that for a sufficiently small configuration $\mathrm{x}\in [-1,1]^{n\times n}$ on the transition kernel $P^{\pi}$, the policy applied to the configured kernel $P^{\pi}+\mathrm{x}$ remains unchanged as $\pi$.
We fix the discount factor $\gamma$ (scalar) and the reward vector $r^{\pi}$, and we let the optimal transition kernel $P^{\pi}$ change by $\mathrm{x}$. Then, $V^{\pi}$ is a function of $P^{\pi}$. The Jacobian of $V^{\pi}$ with respect to $P^{\pi}$ is denoted as $\nabla_{P^{\pi}} V^{\pi}$, which is a tensor in $\mathbb{R}^{n\times n\times n}$. We write out the Jacobian in the following form:
\begin{align}\label{gradient}
\nabla_{P^{\pi}}V^{\pi}=\begin{bmatrix}
    \frac{\partial V_1^{\pi}}{\partial P^{\pi}}\\
    \vdots\\
    \frac{\partial V^{\pi}_n}{\partial P^{\pi}}
\end{bmatrix}=\begin{bmatrix}\begin{bmatrix}
\frac{\partial V^{\pi}_1}{\partial P^{\pi}_{11}},\dots,\frac{\partial V^{\pi}_1}{\partial P^{\pi}_{1n}}\\\vdots\\\frac{\partial V^{\pi}_1}{\partial P^{\pi}_{n1}},\dots,\frac{\partial V^{\pi}_1}{\partial P^{\pi}_{nn}}\end{bmatrix}\\\vdots\\\begin{bmatrix}\frac{\partial V^{\pi}_n}{\partial P^{\pi}_{11}},\dots,\frac{\partial V^{\pi}_n}{\partial P^{\pi}_{1n}}\\\vdots\\\frac{\partial V^{\pi}_n}{\partial P^{\pi}_{n1}},\dots,\frac{\partial V^{\pi}_n}{\partial P^{\pi}_{nn}}\end{bmatrix}
    \end{bmatrix}.
\end{align}

To solve for the Jacobian $\nabla_{P^{\pi}} V^{\pi}$, we let $\mathrm{x}$ be small enough and write the right-hand side of (\ref{linearization}), and here $\mathrm{x}_{*i}\in [-1,1]^n$ is the $i$-th column of $\mathrm{x}$, $N_i$ is a scalar.
\begin{align*}
    &N + M\begin{bmatrix}
        \mathrm{x}_{11}, \dots, \mathrm{x}_{1n}\\\mathrm{x}_{21}, \dots, \mathrm{x}_{2n}\\\vdots\\\mathrm{x}_{n1}, \dots, P_{nn}
    \end{bmatrix}\begin{bmatrix}
    N_1\\N_2\\\vdots\\N_n
\end{bmatrix}\\
=&N + M\begin{bmatrix}
        N_1\begin{bmatrix}
| \\
\mathrm{x}_{*1} \\
|
\end{bmatrix}+ N_2\begin{bmatrix}
| \\
\mathrm{x}_{*2} \\
|
\end{bmatrix}+\dots+N_n\begin{bmatrix}
| \\
\mathrm{x}_{*n} \\
|
\end{bmatrix}
    \end{bmatrix}\\
    =&N + \begin{bmatrix}
        N_1M\begin{bmatrix}
| \\
\mathrm{x}_{*1} \\
|
\end{bmatrix}+ N_2M\begin{bmatrix}
| \\
\mathrm{x}_{*2} \\
|
\end{bmatrix}+\dots+N_nM\begin{bmatrix}
| \\
\mathrm{x}_{*n} \\
|
\end{bmatrix}
    \end{bmatrix}\\
\end{align*}

We show the process of solving for the gradient of $V^{\pi}_i$, denoted as $\frac{\partial V^{\pi}_i}{\partial P^{\pi}}$, as an example. Here $\frac{\partial V^{\pi}_i}{\partial P^{\pi}}$ is the $i$-th element of the Jacobian $\nabla_{P^{\pi}} V^{\pi}$ defined in (\ref{gradient}). Based on previous derivations, we have
\begin{align*}
    V^{\pi}_i(P^{\pi}+\mathrm{x}) - V^{\pi}_i(P^{\pi}) \approx \sum_{j=1}^n N_jM_{i*}\mathrm{x}_{*j}.
\end{align*}
Here $N_j$ is a scalar, $M_{i*}\in\mathbb{R}^n$ is the $i$-th row vector of $M$, and $\mathrm{x}_{*j}\in[-1,1]^{n}$ is the $j$-th column vector of $\mathrm{x}$. The gradient $\frac{\partial V^{\pi}_i}{\partial P^{\pi}}$ is therefore
\begin{align*}
    \frac{\partial V^{\pi}_i}{\partial P^{\pi}}=\begin{bmatrix}
\frac{\partial V^{\pi}_i}{\partial P^{\pi}_{11}},\dots,\frac{\partial V^{\pi}_i}{\partial P^{\pi}_{1n}}\\\vdots\\\frac{\partial V^{\pi}_i}{\partial P^{\pi}_{n1}},\dots,\frac{\partial V^{\pi}_i}{\partial P^{\pi}_{nn}}\end{bmatrix}=\begin{bmatrix}
    N_1M_{i1}&N_2M_{i1}&\dots&N_nM_{i1}\\N_1M_{i2}&N_2M_{i2}&\dots&N_nM_{i2}\\\vdots\\N_1M_{in}&N_2M_{in}&\dots&N_nM_{in}
\end{bmatrix}=\begin{bmatrix}
    | &|& &|\\
M_{i*} &M_{i*}&(n\text{ cols.})&M_{i*}\\
|&|& &|
\end{bmatrix}N
\end{align*}
Let $E_i\in\mathbb{R}^{n\times n}$ denote the matrix that has $M_{i*}$ as its repeated $n$ columns. Therefore, the Jacobian of $V^{\pi}$ can be represented as:
\begin{align}
    \nabla_{P^{\pi}}V^{\pi}= [
        E_1N,
        E_2N,
        \dots,
        E_nN]^T
\end{align}

\bibliographystyle{unsrt}
\bibliography{iclr2026_conference}

\begin{thebibliography}{10}

\bibitem{RLHF}
Paul Christiano, Jan Leike, Tom~B. Brown, Miljan Martic, Shane Legg, and Dario Amodei.
\newblock Deep reinforcement learning from human preferences, 2023.

\bibitem{portfolio-management}
Zhipeng Liang, Hao Chen, Junhao Zhu, Kangkang Jiang, and Yanran Li.
\newblock Adversarial deep reinforcement learning in portfolio management, 2018.

\bibitem{DRL-robotics}
Chen Tang, Ben Abbatematteo, Jiaheng Hu, Rohan Chandra, Roberto Martín-Martín, and Peter Stone.
\newblock Deep reinforcement learning for robotics: A survey of real-world successes, 2024.

\bibitem{deisenroth2011pilco}
Marc~Peter Deisenroth and Carl~Edward Rasmussen.
\newblock Pilco: A model-based and data-efficient approach to policy search.
\newblock In {\em Proceedings of the 28th International Conference on Machine Learning (ICML)}, pages 465--472. ACM, 2011.

\bibitem{Q-learning}
Christopher~JCH Watkins and Peter Dayan.
\newblock Q-learning.
\newblock {\em Machine Learning}, 8(3-4):279--292, 1992.

\bibitem{policy-gradient}
Ronald~J Williams.
\newblock Simple statistical gradient-following algorithms for connectionist reinforcement learning.
\newblock {\em Machine Learning}, 8(3-4):229--256, 1992.

\bibitem{fundamental_CMDP}
Alberto~Maria Metelli, Mirco Mutti, and Marcello Restelli.
\newblock Configurable markov decision processes, 2018.

\bibitem{Analysis_CMDP}
Rui Silva, Gabriele Farina, Francisco~S. Melo, and Manuela Veloso.
\newblock A theoretical and algorithmic analysis of configurable mdps.
\newblock In {\em Proceedings of the Twenty-Ninth International Conference on Automated Planning and Scheduling (ICAPS 2019)}, 2019.
\newblock Accessed: 2025-09-23.

\bibitem{pmlr-v162-chen22ab}
Siyu Chen, Donglin Yang, Jiayang Li, Senmiao Wang, Zhuoran Yang, and Zhaoran Wang.
\newblock Adaptive model design for markov decision process.
\newblock In Kamalika Chaudhuri, Stefanie Jegelka, Le~Song, Csaba Szepesvari, Gang Niu, and Sivan Sabato, editors, {\em Proceedings of the 39th International Conference on Machine Learning}, volume 162 of {\em Proceedings of Machine Learning Research}, pages 3679--3700. PMLR, 17--23 Jul 2022.

\bibitem{thoma2024contextualbilevelreinforcementlearning}
Vinzenz Thoma, Barna Pasztor, Andreas Krause, Giorgia Ramponi, and Yifan Hu.
\newblock Contextual bilevel reinforcement learning for incentive alignment, 2024.

\bibitem{Silva2018Gradient}
Rui Silva, Francisco~S. Melo, and Manuela Veloso.
\newblock What if the world were different: Gradient-based exploration for new optimal policies.
\newblock In {\em Proceedings of the 18th International Conference on Autonomous Agents and Multiagent Systems (AAMAS)}, 2018.

\bibitem{Modhe2021ModelAdvantageOF}
Nirbhay Modhe, Harish Kamath, Dhruv Batra, and A.~Kalyan.
\newblock Model-advantage optimization for model-based reinforcement learning.
\newblock {\em ArXiv}, abs/2106.14080, 2021.

\bibitem{MaranOnlineCI}
Davide Maran, Pierriccardo Olivieri, Francesco~Emanuele Stradi, Giuseppe Urso, Nicola Gatti, and Marcello Restelli.
\newblock Online configuration in continuous decision space.

\bibitem{Ramponi2021LearningIN}
Giorgia Ramponi, Alberto~Maria Metelli, Alessandro Concetti, and Marcello Restelli.
\newblock Learning in non-cooperative configurable markov decision processes.
\newblock In {\em Neural Information Processing Systems}, 2021.

\bibitem{li2022hierarchical}
Wenhao Li, Bolei Liu, Dongbin Zhao, and Huaxia Zhang.
\newblock Hierarchical reinforcement learning: A comprehensive survey.
\newblock {\em ACM Computing Surveys}, 54(5):1--35, 2022.

\bibitem{semi-MDP}
Richard~S. Sutton, Doina Precup, and Satinder Singh.
\newblock Between mdps and semi-mdps: A framework for temporal abstraction in reinforcement learning.
\newblock {\em Artificial Intelligence}, 112(1):181--211, 1999.

\bibitem{meta-RL}
Yan Duan, John Schulman, Xi~Chen, Peter~L. Bartlett, Ilya Sutskever, and Pieter Abbeel.
\newblock Rl$^2$: Fast reinforcement learning via slow reinforcement learning, 2016.

\bibitem{Duff2002OptimalLC}
Michael~O. Duff and Andrew~G. Barto.
\newblock Optimal learning: computational procedures for bayes-adaptive markov decision processes.
\newblock 2002.

\bibitem{Sun2021SimulationLemma}
Wen Sun.
\newblock Note on simulation lemma.
\newblock Technical report, Cornell University, 2021.

\bibitem{brockman2016openai}
Greg Brockman, Vicki Cheung, Ludwig Pettersson, Jonas Schneider, John Schulman, Jie Tang, and Wojciech Zaremba.
\newblock Openai gym.
\newblock \url{https://arxiv.org/abs/1606.01540}, 2016.

\bibitem{RusselNorvig}
Stuart Russell and Peter Norvig.
\newblock {\em Artificial Intelligence: A Modern Approach, 4th US ed.}
\newblock USA, 2022.

\end{thebibliography}

\end{document}